
\documentclass[10pt,journal,compsoc]{IEEEtran}

\usepackage{subfigure}
\usepackage{graphicx}
\usepackage{multirow}
\usepackage{algorithm}
\usepackage{algorithmic}
\usepackage{url}
\usepackage{balance}
\usepackage{booktabs}
\usepackage{proof}
\usepackage{amsmath,amssymb,amsfonts}
\newtheorem{theorem}{Theorem}
\newtheorem{proof}{Proof}

% *** CITATION PACKAGES ***
%
\ifCLASSOPTIONcompsoc
  % IEEE Computer Society needs nocompress option
  % requires cite.sty v4.0 or later (November 2003)
  \usepackage[nocompress]{cite}
\else
  % normal IEEE
  \usepackage{cite}
\fi
% cite.sty was written by Donald Arseneau
% V1.6 and later of IEEEtran pre-defines the format of the cite.sty package
% \cite{} output to follow that of the IEEE. Loading the cite package will
% result in citation numbers being automatically sorted and properly
% "compressed/ranged". e.g., [1], [9], [2], [7], [5], [6] without using
% cite.sty will become [1], [2], [5]--[7], [9] using cite.sty. cite.sty's
% \cite will automatically add leading space, if needed. Use cite.sty's
% noadjust option (cite.sty V3.8 and later) if you want to turn this off
% such as if a citation ever needs to be enclosed in parenthesis.
% cite.sty is already installed on most LaTeX systems. Be sure and use
% version 5.0 (2009-03-20) and later if using hyperref.sty.
% The latest version can be obtained at:
% http://www.ctan.org/pkg/cite
% The documentation is contained in the cite.sty file itself.
%
% Note that some packages require special options to format as the Computer
% Society requires. In particular, Computer Society  papers do not use
% compressed citation ranges as is done in typical IEEE papers
% (e.g., [1]-[4]). Instead, they list every citation separately in order
% (e.g., [1], [2], [3], [4]). To get the latter we need to load the cite
% package with the nocompress option which is supported by cite.sty v4.0
% and later. Note also the use of a CLASSOPTION conditional provided by
% IEEEtran.cls V1.7 and later.

% *** GRAPHICS RELATED PACKAGES ***
%
\ifCLASSINFOpdf
  % \usepackage[pdftex]{graphicx}
  % declare the path(s) where your graphic files are
  % \graphicspath{{../pdf/}{../jpeg/}}
  % and their extensions so you won't have to specify these with
  % every instance of \includegraphics
  % \DeclareGraphicsExtensions{.pdf,.jpeg,.png}
\else
  % or other class option (dvipsone, dvipdf, if not using dvips). graphicx
  % will default to the driver specified in the system graphics.cfg if no
  % driver is specified.
  % \usepackage[dvips]{graphicx}
  % declare the path(s) where your graphic files are
  % \graphicspath{{../eps/}}
  % and their extensions so you won't have to specify these with
  % every instance of \includegraphics
  % \DeclareGraphicsExtensions{.eps}
\fi
% graphicx was written by David Carlisle and Sebastian Rahtz. It is
% required if you want graphics, photos, etc. graphicx.sty is already
% installed on most LaTeX systems. The latest version and documentation
% can be obtained at:
% http://www.ctan.org/pkg/graphicx
% Another good source of documentation is "Using Imported Graphics in
% LaTeX2e" by Keith Reckdahl which can be found at:
% http://www.ctan.org/pkg/epslatex
%
% latex, and pdflatex in dvi mode, support graphics in encapsulated
% postscript (.eps) format. pdflatex in pdf mode supports graphics
% in .pdf, .jpeg, .png and .mps (metapost) formats. Users should ensure
% that all non-photo figures use a vector format (.eps, .pdf, .mps) and
% not a bitmapped formats (.jpeg, .png). The IEEE frowns on bitmapped formats
% which can result in "jaggedy"/blurry rendering of lines and letters as
% well as large increases in file sizes.
%
% You can find documentation about the pdfTeX application at:
% http://www.tug.org/applications/pdftex

% correct bad hyphenation here
\hyphenation{op-tical net-works semi-conduc-tor}

\begin{document}
%
% paper title
% Titles are generally capitalized except for words such as a, an, and, as,
% at, but, by, for, in, nor, of, on, or, the, to and up, which are usually
% not capitalized unless they are the first or last word of the title.
% Linebreaks \\ can be used within to get better formatting as desired.
% Do not put math or special symbols in the title.
\title{RMT-Net: Reject-aware Multi-Task Network for Modeling Missing-not-at-random Data in Financial Credit Scoring}
%
%
% author names and IEEE memberships
% note positions of commas and nonbreaking spaces ( ~ ) LaTeX will not break
% a structure at a ~ so this keeps an author's name from being broken across
% two lines.
% use \thanks{} to gain access to the first footnote area
% a separate \thanks must be used for each paragraph as LaTeX2e's \thanks
% was not built to handle multiple paragraphs
%
%
%\IEEEcompsocitemizethanks is a special \thanks that produces the bulleted
% lists the Computer Society journals use for "first footnote" author
% affiliations. Use \IEEEcompsocthanksitem which works much like \item
% for each affiliation group. When not in compsoc mode,
% \IEEEcompsocitemizethanks becomes like \thanks and
% \IEEEcompsocthanksitem becomes a line break with idention. This
% facilitates dual compilation, although admittedly the differences in the
% desired content of \author between the different types of papers makes a
% one-size-fits-all approach a daunting prospect. For instance, compsoc
% journal papers have the author affiliations above the "Manuscript
% received ..."  text while in non-compsoc journals this is reversed. Sigh.

\author{Qiang~Liu,~\IEEEmembership{Member,~IEEE},
        Yingtao~Luo,
        Shu~Wu,~\IEEEmembership{Senior Member,~IEEE},
        Zhen~Zhang,
        Xiangnan~Yue,
        Hong~Jin,
        and~Liang~Wang,~\IEEEmembership{Fellow,~IEEE}
\IEEEcompsocitemizethanks{
\IEEEcompsocthanksitem Qiang Liu, Shu Wu and Liang Wang are with the Center for Research on Intelligent Perception and Computing (CRIPAC), National Laboratory of Pattern Recognition (NLPR), Institute of Automation, Chinese Academy of Sciences (CASIA), Beijing, China.\protect\\
E-mail: \{{qiang.liu, shu.wu, wangliang}\}@nlpr.ia.ac.cn
\IEEEcompsocthanksitem Yingtao Luo is with H. John Heinz III School of Information Systems and Management, Carnegie Mellon University, Pittsburgh, USA.\protect\\
E-mail: yingtaoluo@cmu.edu
\IEEEcompsocthanksitem Zhen Zhang, Xiangnan Yue and Hong Jin are with Ant Group, Hangzhou, China.\protect\\
E-mail: yilue.zz@antfin.com, yuexiangnan.yxn@antgroup.com, jinhong.jh@alibaba-inc.com
}
\thanks{
% Manuscript received October 15, 2021; revised February 15, 2022.\\
(Corresponding authors: Shu Wu and Zhen Zhang)}}

% note the % following the last \IEEEmembership and also \thanks -
% these prevent an unwanted space from occurring between the last author name
% and the end of the author line. i.e., if you had this:
%
% \author{....lastname \thanks{...} \thanks{...} }
%                     ^------------^------------^----Do not want these spaces!
%
% a space would be appended to the last name and could cause every name on that
% line to be shifted left slightly. This is one of those "LaTeX things". For
% instance, "\textbf{A} \textbf{B}" will typeset as "A B" not "AB". To get
% "AB" then you have to do: "\textbf{A}\textbf{B}"
% \thanks is no different in this regard, so shield the last } of each \thanks
% that ends a line with a % and do not let a space in before the next \thanks.
% Spaces after \IEEEmembership other than the last one are OK (and needed) as
% you are supposed to have spaces between the names. For what it is worth,
% this is a minor point as most people would not even notice if the said evil
% space somehow managed to creep in.

% The paper headers
\markboth{Journal of \LaTeX\ Class Files,~Vol.~14, No.~8, August~2015}%
{Shell \MakeLowercase{\textit{et al.}}: Bare Demo of IEEEtran.cls for Computer Society Journals}
% The only time the second header will appear is for the odd numbered pages
% after the title page when using the twoside option.
%
% *** Note that you probably will NOT want to include the author's ***
% *** name in the headers of peer review papers.                   ***
% You can use \ifCLASSOPTIONpeerreview for conditional compilation here if
% you desire.

% The publisher's ID mark at the bottom of the page is less important with
% Computer Society journal papers as those publications place the marks
% outside of the main text columns and, therefore, unlike regular IEEE
% journals, the available text space is not reduced by their presence.
% If you want to put a publisher's ID mark on the page you can do it like
% this:
%\IEEEpubid{0000--0000/00\$00.00~\copyright~2015 IEEE}
% or like this to get the Computer Society new two part style.
%\IEEEpubid{\makebox[\columnwidth]{\hfill 0000--0000/00/\$00.00~\copyright~2015 IEEE}%
%\hspace{\columnsep}\makebox[\columnwidth]{Published by the IEEE Computer Society\hfill}}
% Remember, if you use this you must call \IEEEpubidadjcol in the second
% column for its text to clear the IEEEpubid mark (Computer Society jorunal
% papers don't need this extra clearance.)

% use for special paper notices
%\IEEEspecialpapernotice{(Invited Paper)}

% for Computer Society papers, we must declare the abstract and index terms
% PRIOR to the title within the \IEEEtitleabstractindextext IEEEtran
% command as these need to go into the title area created by \maketitle.
% As a general rule, do not put math, special symbols or citations
% in the abstract or keywords.
\IEEEtitleabstractindextext{%
\begin{abstract}
In financial credit scoring, loan applications may be approved or rejected.
We can only observe default/non-default labels for approved samples but have no observations for rejected samples, which leads to missing-not-at-random selection bias.
Machine learning models trained on such biased data are inevitably unreliable.
In this work, we find that the default/non-default classification task and the rejection/approval classification task are highly correlated, according to both real-world data study and theoretical analysis.
Consequently, the learning of default/non-default can benefit from rejection/approval.
Accordingly, we for the first time propose to model the biased credit scoring data with Multi-Task Learning (MTL).
Specifically, we propose a novel Reject-aware Multi-Task Network (RMT-Net), which learns the task weights that control the information sharing from the rejection/approval task to the default/non-default task by a gating network based on rejection probabilities.
RMT-Net leverages the relation between the two tasks that the larger the rejection probability, the more the default/non-default task needs to learn from the rejection/approval task.
Furthermore, we extend RMT-Net to RMT-Net++ for modeling scenarios with multiple rejection/approval strategies.
Extensive experiments are conducted on several datasets, and strongly verifies the effectiveness of RMT-Net on both approved and rejected samples.
In addition, RMT-Net++ further improves RMT-Net's performances.
\end{abstract}

% Note that keywords are not normally used for peerreview papers.
\begin{IEEEkeywords}
Credit scoring, multi-task learning, default prediction, reject inference, missing-not-at-random.
\end{IEEEkeywords}}

% make the title area
\maketitle

% To allow for easy dual compilation without having to reenter the
% abstract/keywords data, the \IEEEtitleabstractindextext text will
% not be used in maketitle, but will appear (i.e., to be "transported")
% here as \IEEEdisplaynontitleabstractindextext when the compsoc
% or transmag modes are not selected <OR> if conference mode is selected
% - because all conference papers position the abstract like regular
% papers do.
\IEEEdisplaynontitleabstractindextext
% \IEEEdisplaynontitleabstractindextext has no effect when using
% compsoc or transmag under a non-conference mode.

% For peer review papers, you can put extra information on the cover
% page as needed:
% \ifCLASSOPTIONpeerreview
% \begin{center} \bfseries EDICS Category: 3-BBND \end{center}
% \fi
%
% For peerreview papers, this IEEEtran command inserts a page break and
% creates the second title. It will be ignored for other modes.
\IEEEpeerreviewmaketitle

\IEEEraisesectionheading{\section{Introduction}\label{sec:introduction}}

\IEEEPARstart{C}{redit} scoring aims to use machine learning methods to measure customers' default probabilities of credit loans \cite{west2000neural}\cite{hu2020loan}\cite{liu2020alike}\cite{liu2021dnn2lr}\cite{babaev2019rnn}.
Based on the evaluated credits, financial institutions such as banks and online lending companies can decide whether to approve or reject credit loan applications.

When a customer applies for credit loan, his or her application may be approved or rejected.
If the application is approved, it will become an \textit{approved sample}, and the customer will get the loan.
After a period, if the customer repays the credit loan timely, it will be a \textit{non-default sample}; if the customer fails to timely repay, it will be a \textit{default sample}.
In contrast, if the application is not approved, it will become a \textit{rejected sample}, and the customer will not get credit loan.
Since a rejected sample gets no loans, we have no way to observe whether it will be default or non-default.
Above process is illustrated in Fig. \ref{pic:task}.
Credit scoring models are usually constructed based on approved samples, as we have no ground-truth default/non-default labels for rejected samples \cite{li2017reject}\cite{mancisidor2020deep}\cite{ehrhardt2021reject}\cite{goel2021importance}.
The rejection/approval strategies are usually machine learning models or expert rules based on the features of customers, thus approved and rejected samples share different feature distributions.
This makes us face the \textit{missing-not-at-random selection bias} in data \cite{goel2021importance}\cite{marlin2009collaborative}\cite{schnabel2016recommendations}. 
However, when serving online, credit scoring models need to infer credits of loan applications in feature distributions of both approved and rejected samples.
Training models with such biased data has severe consequences that the model parameters are biased \cite{bucker2013reject}, i.e., the predicted relation between input features and default probability is incorrect.
Using such models on samples across various data distributions leads to significant economic losses \cite{mancisidor2020deep}\cite{chen2001economic}\cite{nguyen2016reject}.
Therefore, for reliable credit scoring, besides the modeling of approved samples, we also need to take rejected ones into consideration and infer their true credits \cite{hand1993can}.

In practice, machine learning models like Logistic Regression (LR), Support Vector Machines (SVM), Multi-Layer Perceptron (MLP) and XGBoost (XGB) are widely used for modeling credit scoring data. However, they are affected by the missing-not-at-random bias in data to produce reliable and accurate predictions.
To tackle this problem, some existing approaches address the selection bias and conduct reject inference from multiple perspectives.
Some approaches apply the self-training algorithm \cite{agrawala1970learning}, which iteratively adds rejected samples with higher default probability as default samples to retrain the model \cite{maldonado2010semi}.
This is a semi-supervised approach \cite{zhu2009introduction}.
Besides, Semi-Supervised SVM (S3VM) \cite{li2017reject} and Semi-Supervised Gaussian Mixture Models (SS-GMM) \cite{mancisidor2020deep} are also deployed in credit scoring systems.
In another perspective, some approaches attempt to re-weight the training approved samples to approximate unbiased data \cite{nguyen2016reject}\cite{hsia1978credit}\cite{banasik2007reject}\cite{banasik2003sample}.
These approaches are similar to counterfactual learning \cite{marlin2009collaborative}\cite{schnabel2016recommendations}\cite{hassanpour2019counterfactual}\cite{zou2020counterfactual}, which attempts to re-weight observed samples to remove bias in data.

\begin{figure}
\centering
\includegraphics[width=0.48\textwidth]{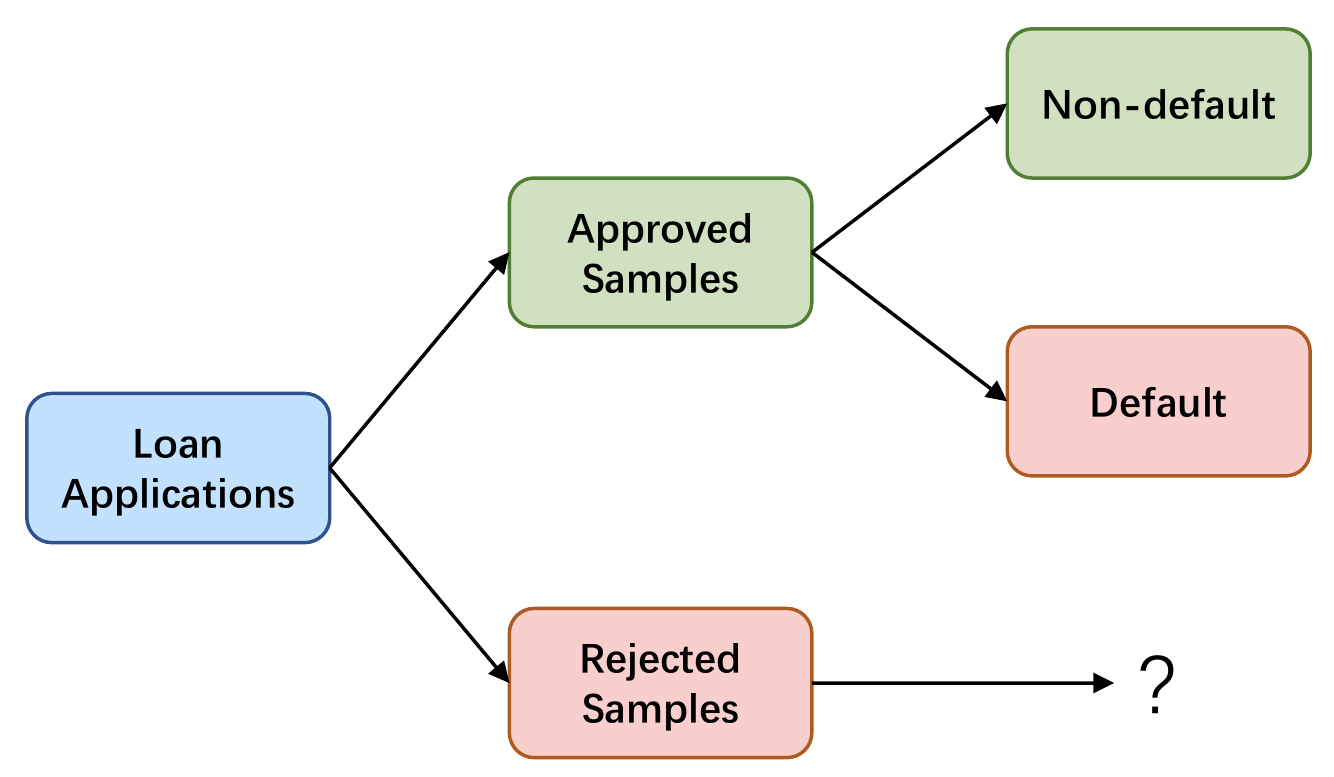}
\caption{Illustration of data bias in credit scoring.}
\label{pic:task}
\end{figure}

Though some of the above approaches have achieved relative improvements on some credit scoring datasets \cite{mancisidor2020deep}\cite{nguyen2016reject}, they cannot achieve optimal performances due to the lack of consideration of some key factors.
Specifically, we find that the \textbf{default/non-default} classification task and the \textbf{rejection/approval} classification task are highly correlated in real credit scoring applications, according to both real-world data study and theoretical analysis in Sec. \ref{sec:analysis}.
Intuitively speaking, with an effective credit approval system, rejected customers have higher default ratios, while approved customers have lower ones.
Consequently, \textbf{the learning of default/non-default can benefit from the learning of rejection/approval}.
Accordingly, it might be promising to incorporate \textit{Multi-Task Learning (MTL)} \cite{caruana1997multitask} for modeling biased credit scoring data.

Nowadays, state-of-the-art MTL approaches mainly focus on adaptively learning weights of different tasks in a mixture-of-experts structure \cite{ma2018modeling}\cite{tang2020progressive}\cite{xi2021modeling}\cite{zhao2019multiple}\cite{liu2019end}.
This makes task weights changing in different samples so that tasks can share useful but not conflict information adaptively.
Such MTL approaches achieve promising performances in various scenarios.
However, when we use state-of-the-art MTL approaches for modeling the default/non-default task and the rejection/approval task, we do not achieve satisfactory performances, and even achieve poor performances in default prediction on rejected samples.
This may be because we have no observed default/non-default labels for rejected samples during model training. \textbf{The task weights, which decide how much information is shared between the two tasks, are not well optimized in the feature distribution of rejected samples}.
Thus, exiting MTL approaches fail in modeling the biased credit scoring data, and we need a novel and specially-designed MTL approach.

Accordingly, we propose a \textbf{Reject-aware Multi-Task Network (RMT-Net)}.
RMT-Net learns the weights that control the information sharing from the rejection/approval task to the default/non-default task by a gating network based on rejection probabilities. 
With larger rejection probability, less reliable information can be learned in the default/non-default network and more information is shared from the rejection/approval network.
In this way, we can consider the correlation between rejected samples and default samples, as well as personalize the information sharing weights in the feature distribution of rejected samples.
Furthermore, we consider cases with \textit{multiple rejection/approval strategies}, and extend RMT-Net to \textbf{RMT-Net++}, which models several rejection/approval classification tasks in the MTL framework.

In all, we verify RMT-Net and RMT-Net++ on $10$ datasets under different settings, in which significant improvements are achieved for default prediction on both accepted and rejected samples.
Evaluated by the commonly-used Kolmogorov-Smirnov (KS) metric\footnote{\url{https://en.wikipedia.org/wiki/Kolmogorov–Smirnov_test}} in credit scoring, comparing with conventional classifiers, i.e. LR, DNN, and XGB, RMT-Net relatively improves the performances by $47.9\%$ on average.
Comparing with the most competitive reject inference approaches, RMT-Net relatively improves the performances by $11.9\%$ on average.
In addition, we show in an extra experiment with multiple rejection/approval strategies that RMT-Net++ can further relatively improve the performances of RMT-Net by $5.8\%$ on average.

The main contributions of this work are concluded:
\begin{itemize}
    \item We for the first time propose to model biased credit scoring data using an MTL approach, namely RMT-Net. Instead of directly using conventional MTL approaches, we present several modifications to improve the poor performances of existing MTL approaches on credit scoring.
    \item We further consider multiple rejection/approval strategies, and extend RMT-Net to RMT-Net++. In this way, our work suits different application scenarios in real applications.
    \item Extensive experiments are conducted on $10$ datasets under different settings. Significant improvements are achieved by our proposed RMT-Net approach on both accepted and rejected samples. In addition, we show that RMT-Net++ with multiple strategies can further improve the performances.
\end{itemize}

The rest of the paper is organized as follows. In Section 2, we review some related work on reject inference, counterfactual learning and multi-task learning.
Then we analyze the correlation between the default/non-default task and the rejection/approval task according to both real-world data study and theoretical analysis in Section 3.
Sections 4 and 5 detail our proposed RMT-Net and RMT-Net++ under single strategy and multiple strategies respectively.
In Section 6, we conduct empirical experiments to verify the effectiveness of RMT-Net and RMT-NET++.
Section 7 concludes our work.

\section{Related Work}

In this section, we review some works on reject inference, as well as two important related research aspects: counterfactual learning and multi-task learning.

\subsection{Reject Inference}

In the credit scoring task, we have only ground-truth default/non-default labels for approved samples but no ground-truth default/non-default labels for rejected samples.
This causes the missing-not-at-random bias in data \cite{goel2021importance}\cite{marlin2009collaborative}\cite{schnabel2016recommendations} for machine learning models. Some reject inference approaches are accordingly proposed \cite{ehrhardt2021reject}\cite{nguyen2016reject}\cite{hand1993can}.

Augmentation is a re-weighting approach \cite{hsia1978credit}\cite{banasik2007reject}\cite{banasik2003sample}, in which accepted samples are re-weighted to represent the entire distribution.
A common way to achieve this is re-weighting according to the rejection/approval probability.
Moreover, the augmentation approach has been extended in a fuzzy way \cite{nguyen2016reject}.
Parcelling is also a re-weighting approach, where the re-weighting is determined by the default probability by score-band that is adjusted by the credit modeler \cite{ehrhardt2021reject}\cite{banasik2003sample}.
To be noted, these re-weighting methods are similar to the researches on counterfactual learning \cite{marlin2009collaborative}\cite{schnabel2016recommendations}\cite{hassanpour2019counterfactual}\cite{zou2020counterfactual}.
Counterfactual learning aims to remove data bias, in which the re-weighting of training samples is widely adopted.

Meanwhile, semi-supervised approaches are also applied to deal with the reject inference task.
In \cite{maldonado2010semi}, the authors use a self-training algorithm to improve the performance of SVM on credit scoring.
Self-training, also known as self-labeling or decision-directed learning, is the most simple semi-supervised learning method \cite{agrawala1970learning}\cite{culp2008iterative}\cite{haffari2012analysis}.
This approach trains a model on approved samples, and labels rejected samples with largest default probabilities as default samples according to model predictions.
Then, the newly labeled samples are added to retrain the model, and this process continues iteratively.
Though the self-training algorithm is only used to promote SVM in \cite{maldonado2010semi}, it can also promote other classifiers, such as LR, MLP and XGB.
Besides, another semi-supervised version of SVM called S3VM \cite{li2017reject} is also applied in reject inference.
S3VM uses approved and rejected samples to fit an optimal hyperplane with maximum margin, but have problem in fitting large-scale data \cite{mancisidor2020deep}.
Meanwhile, earlier works have used some statistical machine learning methods, such as Expectation-Maximization (EM) algorithm \cite{anderson2013modified}, Gaussian Mixture Models (GMM) \cite{feelders2000credit} and survival analysis \cite{sohn2006reject}, for reject inference.
Based on GMM and inspired by semi-supervised generative models \cite{rezende2014stochastic}\cite{kingma2014semi}, SS-GMM \cite{mancisidor2020deep} is proposed for modeling biased credit scoring data. The counterfactual re-weighting and semi-supervised learning are the main methods for reject inference, but neither approach considers the correlation between the learning of rejection/approval and the learning of default/non-default.

\subsection{Counterfactual Learning}

Counterfactual learning \cite{zou2020counterfactual} is a key direction of the research on causal inference \cite{pearl2009causal}\cite{morgan2015counterfactuals}.
Counterfactual learning aims to simulate counterfactuals to alleviate the missing-not-at-random bias for a less biased model training \cite{little2019statistical}.
In the context of credit scoring, we know that rejected samples are unobserved, but counterfactual learning tries to answer ``what if they are observable?''

The traditional counterfactual learning approaches usually re-weight samples based on propensity scores \cite{hassanpour2019counterfactual}\cite{zou2020counterfactual}\cite{austin2011introduction}\cite{rosenbaum1983central}\cite{hassanpour2019learning}\cite{lopez2017estimation}.
Propensity scores indicate the probabilities of observation under different environments, e.g., approval and rejection in the credit scoring scheme.
These propensity score-based methods and re-weighting approaches in reject inference \cite{hsia1978credit}\cite{banasik2007reject}\cite{banasik2003sample} are similar, and both try to balance the data distributions of observed and unobserved samples.
Stable learning is another perspective of counterfactual learning, in which there is no implicit treatments and the distribution of unobserved samples is unknown \cite{kuang2018stable}.
Stable learning is usually done via decorrelation among features of samples, which tries to make the feature distribution closer to independently identically distribution \cite{kuang2020stable}\cite{shen2020stable}\cite{zhang2021deep}.
Meanwhile, Sample Reweighted Decorrelation Operator (SRDO) \cite{shen2020stable} generates some unobserved samples, and trains a binary classifier to get the probabilities of observation for re-weighting the observed samples.
This is somehow similar to the propensity score-based approaches.

Missing-not-at-random selection bias is also frequently discussed in recommender systems, where we can only observe feedback of displayed user-item pairs \cite{marlin2009collaborative}\cite{sato2020unbiased}.
In counterfactual recommendation, propensity score is also applied, and the Inverse Propensity Score (IPS) approach \cite{schnabel2016recommendations}\cite{swaminathan2015self} that re-weights observed samples with the inverse of displayed probabilities is proposed.
Based on IPS, Doubly Robust (DR) \cite{jiang2016doubly} and Joint Learning Doubly Robust (DRJL) \cite{wang2019doubly} are proposed to consider doubly robust estimator.
After that, some improvements have been presented, such as asymmetrically tri-training \cite{saito2020asymmetric}, considering information theory \cite{wang2020information}, and proposing better doubly robust estimators \cite{guo2021enhanced}.
Meanwhile, the Adversarial Counterfactual Learning (ACL) approach \cite{xu2020adversarial} incorporates adversarial learning for counterfactual recommendation.
Besides, some works on counterfactual recommendation rely on a small amount of random unbiased data \cite{bonner2018causal}\cite{yuan2019improving}\cite{chen2021autodebias}.
However, random data requires high costs, especially in financial applications.

\subsection{Multi-task Learning}

MTL learns multiple tasks simultaneously in one model, and has been proven to improve performances through information sharing between tasks \cite{caruana1997multitask}\cite{tang2020progressive}.
It has succeed in scenarios such as computer vision \cite{liu2019end}\cite{misra2016cross}\cite{ruder2017sluice}, recommender systems \cite{ma2018modeling}\cite{tang2020progressive}\cite{xi2021modeling}\cite{zhao2019multiple}\cite{liu2017multi}\cite{wen2020entire}, healthcare \cite{liu2019complication}, and other prediction problems \cite{zhao2017feature}\cite{xu2017online}.

The simplest MTL approach is hard parameter sharing, which shares hidden representations across different tasks, and only the last prediction layers are special for different tasks \cite{caruana1997multitask}.
However, hard parameter sharing suffers from conflicts among tasks, due to the simple sharing of representations.
To deal with this problem, some approaches propose to learn weights of linear combinations to fuse hidden representations in different tasks, such as Cross-Stitch Network \cite{misra2016cross} and Sluice Network \cite{ruder2017sluice}.
However, in different samples, the weights of different tasks stay the same, which limits the performances of MTL.
This inspires the research on applying gating structures in MTL \cite{ma2018modeling}\cite{tang2020progressive}\cite{xi2021modeling}\cite{jacobs1991adaptive}.
Mixture-Of-Experts (MOE) first proposes to share and combine several experts through a gating network \cite{jacobs1991adaptive}.
Based on MOE, to make the weights of different tasks varying across different samples and to improve the performances of MTL, Multi-gate MOE (MMOE) \cite{ma2018modeling} proposes to use different gates for different tasks.
Progressive Layered Extraction (PLE) further extends MMOE, and incorporates multi-level experts and gating networks \cite{tang2020progressive}.
Besides, attention networks are also utilized for assigning weights of tasks according to different feature representations \cite{zhao2019multiple}\cite{liu2019end}.

\begin{table*}[t]
  \centering
  \caption{The mean values of Fico score, debt-to-income ratio, loan amount and employment length of approved and rejected customers in the Lending Club dataset across different years.}
    \begin{tabular}{ccccccccc}
    \toprule
    \multirow{2}[2]{*}{Year} & \multicolumn{2}{c}{Fico Score} & \multicolumn{2}{c}{Debt-To-Income Ratio (\%)} & \multicolumn{2}{c}{Loan Amount} & \multicolumn{2}{c}{Employment Length (year)} \\
          & Approved & Rejected & Approved & Rejected & Approved & Rejected & Approved & Rejected \\
    \midrule
    2013  & 692   & 649   & 15.04  & 20.22  & 13921   & 13271   & 2.34  & 1.68 \\
    2014  & 687   & 638   & 14.76  & 19.88  & 12386   & 12040   & 2.14  & 1.58 \\
    2015  & 687   & 640   & 14.71  & 22.03  & 13542   & 13663   & 2.55  & 1.56 \\
    2016  & 692   & 636   & 14.43  & 24.14  & 13912   & 13698   & 2.70  & 1.37 \\
    2017  & 692   & 636   & 14.20  & 23.00  & 12644   & 12501   & 2.40  & 1.26 \\
    2018  & 702   & 632   & 14.56  & 20.37  & 13897   & 13385   & 3.28  & 1.23 \\
    \bottomrule
    \end{tabular}%
  \label{tab:data_stydy}%
\end{table*}%

\section{Analysis} \label{sec:analysis}
In this section, we plan to analyze the correlation between the \textbf{default/non-default} task and the \textbf{rejection/approval} task.
First, we analyze the correlation between the rejected samples and the approved samples in real-world datasets, in which we can conclude that users whose loan applications are rejected are more likely to default.
Then, we theoretically prove that the reject/approve task and the default/non-default task are correlated, so that we are motivated to model the reject/approve task and the default/non-default task by multi-task learning.

\subsection{Data Study}
With analyses of public real-world datasets, we plan to illustrate the difference between the rejected samples and the approved samples.
Specifically, we adopt the Lending Club dataset\footnote{\url{https://www.kaggle.com/wordsforthewise/lending-club}} in our analysis.
Lending Club\footnote{\url{https://www.lendingclub.com/}} is one of the largest credit loan companies worldwide.
Though there are no ground-truth default/non-default labels associated with the rejected samples, the Lending Club dataset is a valuable dataset since it is a rare publicly available credit scoring dataset that contains rejected samples and their features.
With the help of the Lending Club dataset, we can empirically investigate the difference between the approved samples and the rejected samples.

In the Lending Club dataset, there are totally four feature fields: Fico score, debt-to-income ratio, loan amount, and employment length.
In order to investigate the difference between the approved samples and the rejected samples, in Tab. \ref{tab:data_stydy}, we show Fico score, debt-to-income ratio, loan amount and employment length of the approved customers and the rejected customers in the Lending Club dataset across different years.
Meanwhile, we remove samples with missing values in the Lending Club dataset.
From results in Tab. \ref{tab:data_stydy}, we have following observations.
(1) Approved customers have higher Fico scores, while rejected customers have lower Fico scores. Fico score, which is provided by the Fico Company\footnote{\url{https://www.fico.com/}}, integrates a customer’s credit record. A larger Fico score means better credit history of a customer, which leads to lower default risk.
(2) Loan amounts are similar between approved and rejected customers. And approved customers have lower debt-to-income ratios, while rejected customers have higher debt-to-income ratios. Debt-to-income ratio is calculated as $\frac{Monthly Income}{Loan Amount}$, which means debt-to-income ratio is a derived variable of loan amount. A larger debt-to-income ratio usually means worse repayment ability, which leads to a larger probability of default.
(3) Approved customers have larger employment length than rejected customers. Customers with larger employment length usually have better repayment ability.
These observations tell us that, rejected loan applications are more likely to default or overdue.

\subsection{Motivation}

We show by Theorem \ref{theorem:correlation} that the prediction of default/non-default task and the prediction of rejection/approval task are positively correlated.
Therefore, the learning of rejection/approval task can be beneficial for the learning of default/non-default task via multi-task learning methods.

\begin{theorem}[Correlation between default/non-default and rejection/approval] \label{theorem:correlation}
Assume that default samples are denoted as $D$, non-default samples are denoted as $\bar{D}$, rejected samples are denoted as $R$, and approved samples are denoted as $\bar{R}$. A loan is either default or non-default, and is either rejected or approved. If the prerequisite rejection strategy is effective, which means that the default rate of rejected samples is indeed larger than the default rate of the approved samples, i.e., $P(D|R)>P(D|\bar{R})$, then the correlation coefficient $corr(D,R)=\frac{P(DR)-P(D)P(R)}{\sqrt{P(D)P(\bar{D})P(R)P(\bar{R})}}>0$, i.e., the rejection and the default are positively correlated.
\end{theorem}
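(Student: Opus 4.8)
The plan is to show that the sign of $corr(D,R)$ is controlled entirely by the sign of its numerator, and then to rewrite that numerator in terms of the conditional default rates $P(D|R)$ and $P(D|\bar{R})$ so that the effectiveness hypothesis $P(D|R)>P(D|\bar{R})$ applies directly.

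First I would record the implicit non-degeneracy assumption that each of $D$, $\bar{D}$, $R$, $\bar{R}$ occurs with probability strictly between $0$ and $1$ (otherwise the correlation coefficient is undefined). Under this assumption the denominator $\sqrt{P(D)P(\bar{D})P(R)P(\bar{R})}$ is a well-defined positive real number, so $corr(D,R)>0$ is equivalent to $P(DR)-P(D)P(R)>0$, and it suffices to analyze this covariance-like quantity.

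Next I would expand using elementary identities: $P(DR)=P(D|R)P(R)$ and, by the law of total probability, $P(D)=P(D|R)P(R)+P(D|\bar{R})P(\bar{R})$. Substituting these into $P(DR)-P(D)P(R)$, factoring out $P(R)$, and using $1-P(R)=P(\bar{R})$ to collect terms, the expression collapses to $P(R)\,P(\bar{R})\,\bigl(P(D|R)-P(D|\bar{R})\bigr)$.

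Finally I would invoke the hypotheses: $P(R)>0$ and $P(\bar{R})>0$ by non-degeneracy, and $P(D|R)-P(D|\bar{R})>0$ because the rejection strategy is effective. A product of three positive numbers is positive, so the numerator is positive and hence $corr(D,R)>0$. I do not expect a genuine obstacle; the only step requiring care is making the non-degeneracy assumption explicit, so that the denominator is positive and the division is legitimate — without it the claim should be read as a statement about the sign of the covariance $P(DR)-P(D)P(R)$ rather than of the normalized coefficient.
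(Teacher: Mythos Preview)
Your proof is correct and follows essentially the same route as the paper: both reduce the claim to positivity of the numerator $P(DR)-P(D)P(R)$ via the law of total probability, though the paper compresses this into the intermediate inequality $P(D|R)>P(D)$ (and then multiplies by $P(R)$) rather than carrying out your explicit factorization $P(R)P(\bar{R})\bigl(P(D|R)-P(D|\bar{R})\bigr)$. Your version is actually a bit more careful, since you make the non-degeneracy assumption explicit so that the denominator is well defined.
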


\begin{proof}
If $P(D|R)>P(D|\bar{R})$, we have $P(D|R)>P(D|R+\bar{R})=P(DR+D\bar{R})P(R+\bar{R})=P(D)$ since that $R+\bar{R}$ is the full set and $P(R+\bar{R})=1$. Therefore, we have $P(DR)=P(D|R)P(R)>P(D)P(R)$, and $corr(D,R)=\frac{P(DR)-P(D)P(R)}{\sqrt{P(D)P(\bar{D})P(R)P(\bar{R})}}>0$, i.e., the learning of rejection and the learning of default are positively correlated.
\end{proof}

To be noted, in Theorem \ref{theorem:correlation}, we rely on the assumption that the prerequisite rejection strategy is effective, which means the credit approval system is useful.
By this assumption, we can obtain an obviously smaller default rate in approved samples than that in rejected samples.
This assumption is tenable in real-world applications, otherwise, credit loan institutions will face tremendous economic losses, so that the strategy is unlikely to be under using.

In summary, the learning of default/non-default (with limited and biased data) can benefit from the learning of rejection/approval (with a larger amount of unbiased data).

% 320 612
\begin{figure*}
	\centering
	\subfigure[RMT-Net under single strategy.]{
		\begin{minipage}[b]{0.33\textwidth}
			\includegraphics[width=1\textwidth]{./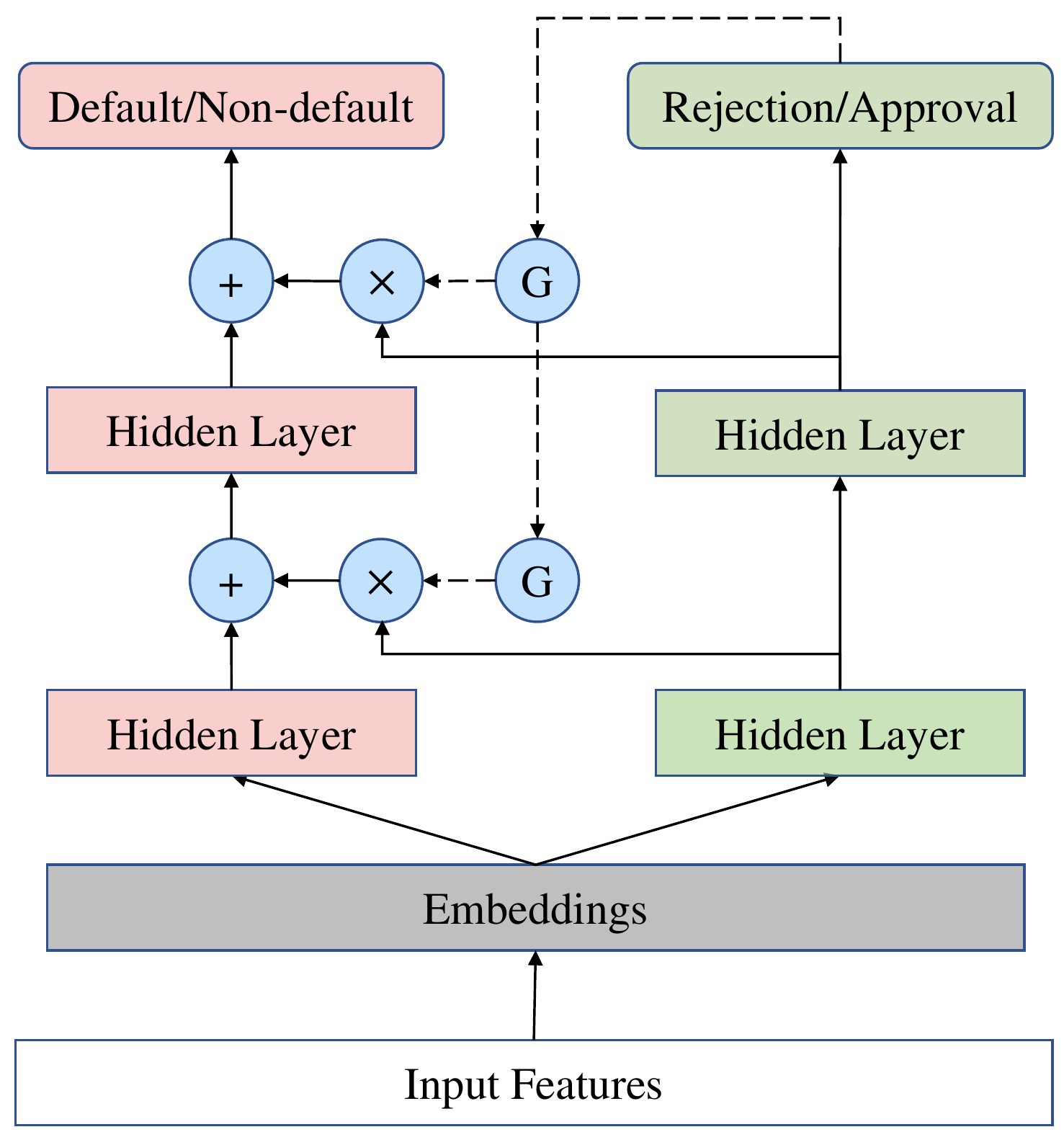}
		\end{minipage}
	    \label{fig:RMT}
	}
	\subfigure[RMT-Net++ under multiple strategies.]{
		\begin{minipage}[b]{0.631\textwidth}
			\includegraphics[width=1\textwidth]{./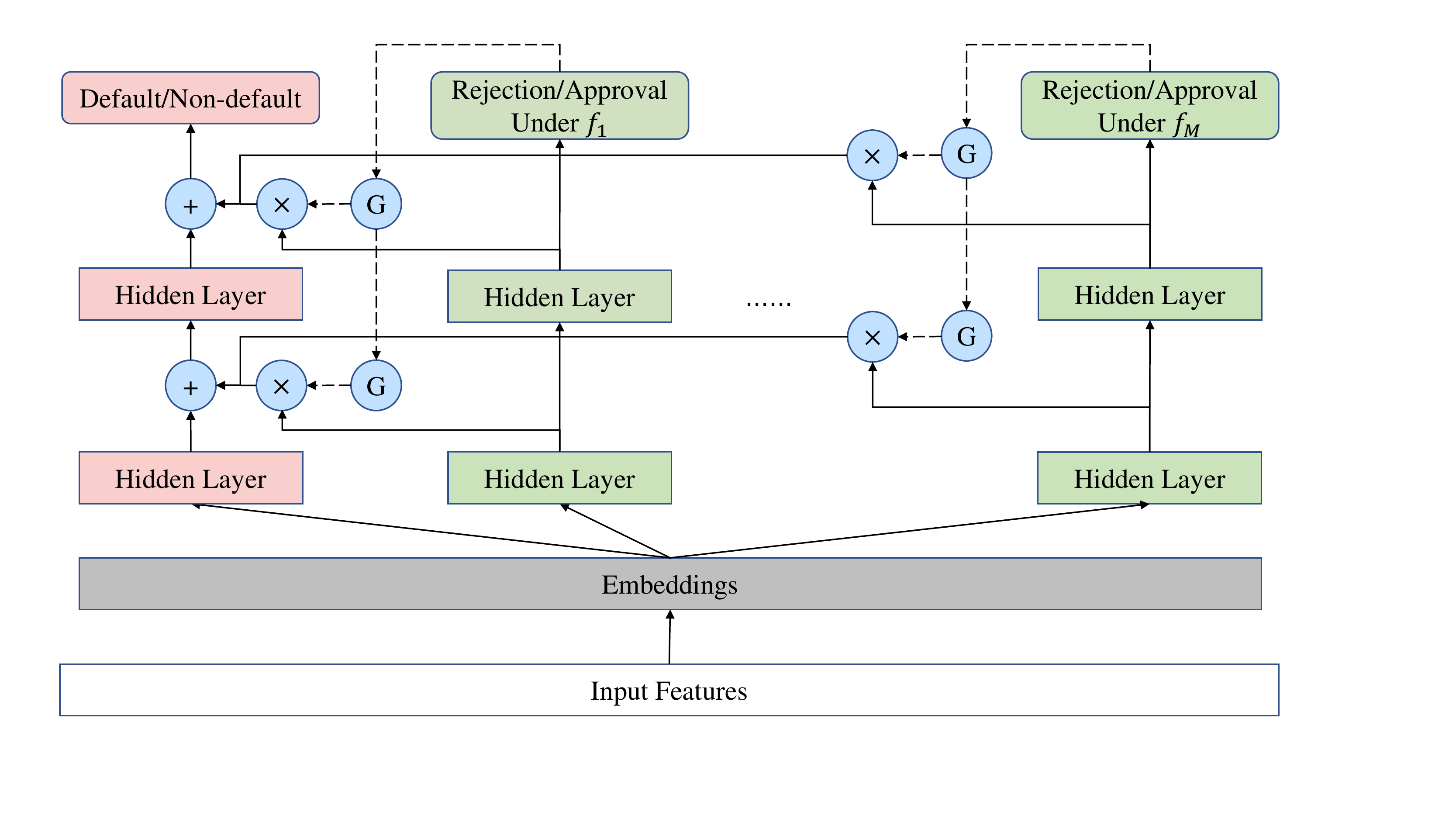}
		\end{minipage}
	    \label{fig:RMT++}
	}
	\caption{Schematic diagram of RMT-Net and RMT-Net++. ``G'' parts denote the gating networks for calculating the weights of the rejection/approval task. The solid lines indicate the feedforward of feature representations. The dotted lines indicate the message passing in gating networks for calculating task weights.}
	\label{fig:model}
\end{figure*}

\section{RMT-Net Under Single Rejection/Approval Strategy} \label{sec:rmt}

According to both data study and theoretical analysis in Sec. \ref{sec:analysis}, the default/non-default task and the rejection/approval task are highly correlated.
Thus, it is proper to model biased credit scoring data with multi-task learning.
In this section, we detail our proposed RMT-Net model under single strategy, which focuses on scenarios with a constant rejection/approval strategy.

\subsection{Notations} \label{sec:notation1}

All applications are denoted as a set of samples $\mathcal S = \left\{ {\mathop s\nolimits_1 ,\mathop s\nolimits_2 ,......,\mathop s\nolimits_N } \right\}$, where $N$ is the total number of loan applications. Each sample ${\mathop s\nolimits_i}$ is associated with a feature vector $\mathop x\nolimits_i  \in \mathop {\mathcal R}\nolimits^d$, where $d$ is the feature dimensionality.
For each sample $\mathop s\nolimits_i  \in \mathcal S$, we have a $\mathop r\nolimits_i  \in \left\{ {0,1} \right\}$ to indicate its rejection/approval label, where $\mathop r\nolimits_i  = 0$ means approval and $\mathop r\nolimits_i  = 1$ means rejection.
For each sample ${\mathop s\nolimits_i}$ approved, i.e., $\mathop r\nolimits_i  = 0$, we have ground-truth default/non-default label $\mathop y\nolimits_i  \in \left\{ {0,1} \right\}$, where $\mathop y\nolimits_i  = 1$ means default and $\mathop y\nolimits_i  = 0$ means non-default.
For each sample ${\mathop s\nolimits_i}$ rejected, i.e., $\mathop r\nolimits_i  = 1$, we have no ground-truth default/non-default label for model training.
We need to use the whole set of $\mathcal S$ to train a model, which should perform well on both approved and rejected samples for default loans prediction.

\subsection{Model Architecture}

Current state-of-the-art MTL approaches mainly focus on adaptively learning weights of different tasks in a mixture-of-experts structure, such as MMOE \cite{ma2018modeling} and PLE \cite{tang2020progressive}.
In such MTL structures, task weights change in different samples, which makes useful but not conflicting information to adaptively share among tasks.
Considering these MTL approaches have achieved promising performances in various scenarios, a simple way for reject inference might be directly applying them in training credit scoring models.
However, according to experiments in Sec. \ref{sec:comparison}, we obtain very poor performances in default prediction on rejected samples.
This could be because some task weights are not well learned, since we have no observed default/non-default labels for rejected samples during model training.
In the feature distribution of rejected samples, there is no supervision for optimizing the task weights to control how much information is shared from the rejection/approval task to the default/non-default task.
Thus, existing MTL approaches fail in biased credit scoring data, and we propose RMT-Net to learn the task weights by a gating network based on rejection probabilities.

Our proposed RMT-Net consists of 1) an embedding layer that learns the dense representation of the feature vectors; 2) a multi-layer rejection/approval prediction network (R/A-Net); 3) a multi-layer default/non-default prediction network (D/N-Net); 4) a gating network that learns the weights of the rejection/approval task for the default/non-default task.
The model architecture is shown in Fig. \ref{fig:RMT}.

\subsection{Embedding Layer}

The embedding layer transforms each feature into a dense vector to facilitate learning efficiency.
For numerical features that are infeasible to embed, feature discretization techniques \cite{liu2002discretization}\cite{franc2018learning}\cite{liu2020empirical} are conducted, which are proven useful to improve the learning efficiency \cite{chapelle2014simple}.
The embedding layer converts each feature vector $\mathop x\nolimits_i  \in \mathop {\mathcal R}\nolimits^d$ into an embedded representation $\mathop e\nolimits_i  \in \mathop {\mathcal R}\nolimits^{d \times k}$, where $k$ is the embedding dimensionality.
Afterward, the embedded representation is flattened to form a one-dimensional embedded vector $\mathop e\nolimits_i  \in \mathop {\mathcal R}\nolimits^{dk}$.
Therefore, after embedding, the data samples $\mathcal S \in \mathop {\mathcal R}\nolimits^{N \times d}$ will become $E \in \mathop {\mathcal R}\nolimits^{N \times dk}$.

\subsection{Rejection/Approval Prediction Network} \label{sec:RA}

The rejection/approval prediction network (R/A-Net) has multiple linear layers with an activation function. 
The first layer multiplies the embedded vector $\mathop e\nolimits_i \in \mathop {\mathcal R}\nolimits^{dk}$ with a weight term $\mathop w\nolimits_R^{(1)} \in \mathop {\mathcal R}\nolimits^{dk \times h_1}$, adds a bias term $\mathop b\nolimits_R^{(1)} \in \mathop {\mathcal R}\nolimits^{h_1}$, and activates with a nonlinear function $relu$. The first layer results in the latent representation of the first layer $\mathop p\nolimits_i^{(1)} \in \mathop {\mathcal R}\nolimits^{h_1}$, where $h_j$ means the dimensionality of the $j$-th layer. 
Except the final layer, the other layers further update the latent representations in the same way of the first layer as
\begin{equation}
\mathop p\nolimits_i^{(j)} = relu(\mathop p\nolimits_i^{(j-1)} \mathop w\nolimits_R^{(j)} + \mathop b_R\nolimits^{(j)}),
\end{equation}
where $\mathop p\nolimits_i^{(j)} \in \mathop {\mathcal R}\nolimits^{h_j}$.
For the final layer, if we denote the number of layers as $t$, we will have $h_t=1$. The final output of the R/A-Net will be the rejection probability
\begin{equation}
\mathop p\nolimits_i^{(t)} = \sigma( \mathop p\nolimits_i^{(t-1)} \mathop w_R\nolimits^{(t)} + \mathop b_R\nolimits^{(t)}),
\end{equation}
where $\mathop p\nolimits_i^{(t)} \in \mathop {\mathcal R}\nolimits^{1}$, and $\sigma (\cdot)$ is the sigmoid function.

\subsection{Default/Non-Default Prediction Network}

The default/non-default prediction network (D/N-Net) has the same number of layers of the R/A-Net as $t$ and resembles its basic linear computations with activation.
The difference is that the latent representation is determined under the proposed multi-task learning framework.
We first design a gating network at each layer $j$ that uses the output of R/A-Net as
\begin{equation}
\mathop g\nolimits_i^{(j)} = \sigma( \alpha^{(j)} \mathop p\nolimits_i^{(t)} + \beta^{(j)} ),
\end{equation}
where $\mathop g\nolimits_i^{(j)} \in \mathop {\mathcal R}\nolimits^{1}$.
$\alpha^{(j)} \in \mathop {\mathcal R}\nolimits^{1}$ and $\beta^{(j)} \in \mathop {\mathcal R}\nolimits^{1}$ are learnable parameters to control the value of $g$, which is designed to indicate the ratio of learning the rejection/approval task for the learning of default/non-default.
Apparently, the activated output of the R/A-Net, i.e. the probability of rejection, determines how much information in R/A-Net are used to learn the default/non-default task.
Except the final layer, the latent representation $\mathop q\nolimits_i^{(j)} \in \mathop {\mathcal R}\nolimits^{h_j}$ of each layer of the D/N-Net can be denoted as
\begin{equation}
\mathop q\nolimits_i^{(j)} = relu(\mathop q\nolimits_i^{(j-1)} \mathop w\nolimits_D^{(j)} + \mathop b\nolimits_D^{(j)}) + \mathop g\nolimits_i^{(j)} \mathop p\nolimits_i^{(j)},
\end{equation}
where $\mathop w\nolimits_D^{(j)}$ and $\mathop b\nolimits_D^{(j)}$ denote the weight term and the bias term respectively in the D/N-Net.
For the final layer, the final output of the D/N-Net is denoted as
\begin{equation}
\mathop q\nolimits_i^{(t)} = \sigma( \mathop q\nolimits_i^{(t-1)} \mathop w\nolimits_D^{(t)} + \mathop b\nolimits_D^{(t)} ),
\end{equation}
where $\mathop q\nolimits_i^{(t)} \in \mathop {\mathcal R}\nolimits^{1}$ means the default probability.

In this way, we can adaptively control the weights of the rejection/approval task in different samples by the rejection probability, and overcome the under-fitting problem of conventional MTL approaches in the feature distribution of rejected samples.
In our proposed MTL architecture, the model can learn that, with a larger rejection probability, less reliable information can be learned in R/A-Net, and more information should be shared from D/N-Net.

\subsection{Loss Function}
For the rejection/approval task, given the rejection/approval label $r_i$ and the output of the R/A-Net $\mathop p\nolimits_i^{(t)}$, we have
\begin{equation}
\mathcal{L}_1=-\sum_{i=1}^N \mathop p\nolimits_i^{(t)} log (r_i) + (1- \mathop p\nolimits_i^{(t)}) log (1-r_i).
\end{equation}

For the default/non-default prediction task, considering we have no observed default/non-default labels for rejected samples, we need to mask the loss with reject/approval labels.
Given the default/non-default label $y_i$, the rejection/approval label $r_i$, and the output of the D/N-Net $\mathop q\nolimits_i^{(t)}$, we have
\begin{equation}
\mathcal{L}_2 = -\sum_{i=1}^N (\mathop q\nolimits_i^{(t)} log (y_i) + (1- \mathop q\nolimits_i^{(t)}) log (1-y_i)) (1-r_i).
\end{equation}

If we define $\eta$ as a hyperparameter to balance the two losses, the overall loss function is denoted as
\begin{equation}
\mathcal{L} = (1-\eta) \cdot \mathcal{L}_1 + \eta \cdot \mathcal{L}_2.
\end{equation}

\section{RMT-Net++ Under Multiple Rejection/Approval Strategies}

In Sec. \ref{sec:rmt}, we have detailed the RMT-Net model under single constant rejection/approval strategy.
However, in real-world applications, rejection/approval strategies change frequently, so we usually have multiple strategies in different periods.
For example, financial institutions may modify approval ratios, add important factors or new factors in the credit evaluation systems.
Thus we encounter a variety of rejection/approval segmentation in the training data of credit scoring.
If we simply regard these strategies as one strategy, there will be conflicts in the model when classifying approved and rejected samples.
Therefore, we extend RMT-Net to RMT-Net++, which further incorporates multiple strategies in the multi-task learning framework to improve the prediction accuracy.

\subsection{Notations}
Basic notations still follow the notations in Sec. \ref{sec:notation1}.
Besides, we have $M$ different rejection/approval strategies $\left\{ {\mathop f\nolimits_1 ,\mathop f\nolimits_2 ,......,\mathop f\nolimits_M } \right\}$.
Each sample $\mathop s\nolimits_i  \in \mathcal S$ is under one specific rejection/approval strategy $f_{{s_i}}$, and its rejection/approval label $\mathop r\nolimits_i  \in \left\{ {0,1} \right\}$ is determined by the corresponding strategy.
Multiple-strategy scenarios can degenerate to single-strategy scenarios when $M=1$.

\subsection{Model Architecture}
The model architecture of RMT-Net++ is based on that of RMT-Net. RMT-Net++ has the same embedding layer as the RMT-Net. Because RMT-Net assumes $M=1$, i.e., it only uses a single rejection/approval strategy for the learning of default/non-default and contains one rejection/approval prediction network (R/A-Net). RMT-Net++, on the contrary, has $M \neq 1$ R/A-Nets that each learns the data samples whose labels are collected by its unique strategy. Therefore, the gating network and the default/non-default prediction network (D/N-Net) in the RMT-Net++ will change accordingly.
The model architecture is shown in Fig. \ref{fig:RMT++}. 

\subsection{Rejection/Approval Networks++}
The rejection/approval networks in RMT-Net++ (R/A-Nets++) include $M \neq 1$ rejection/approval prediction networks (R/A-Net) that each only learns on data samples of the corresponding rejection/approval strategy. Here, each R/A-Net is the same as used in RMT-Net, as described in Sec. \ref{sec:RA}. To distinguish among different R/A-Nets, we might as well add a square bracket to the lower right corner of the notations of each R/A-Net.
In this case, for the $m$-th R/A-Net, we can denote the latent representations of its layers as
\begin{equation}
\mathop p\nolimits_{i, [m]}^{(j)} = relu(\mathop p\nolimits_{i, [m]}^{(j-1)} \mathop w\nolimits_{R,[m]}^{(j)} + \mathop b\nolimits_{R,[m]}^{(j)}),
\end{equation}
and denote the final outputs as
\begin{equation}
\mathop p\nolimits_{i, [m]}^{(t)} = \sigma( \mathop p\nolimits_{i, [m]}^{(t-1)} \mathop w\nolimits_{R,[m]}^{(t)} + \mathop b\nolimits_{R,[m]}^{(t)} ),
\end{equation}
where $\mathop p\nolimits_{i, [m]}^{(j)} \in \mathop {\mathcal R}\nolimits^{h_j}$ and $\mathop p\nolimits_{i, [m]}^{(t)} \in \mathop {\mathcal R}\nolimits^{1}$.

\subsection{Default/Non-Default Network++}

The default/non-default network (D/N-Net++) has the same number of layers of each R/A-Net++ as $t$. Its latent representation is calculated under the proposed multi-task learning framework with multiple strategies.
We design a gating network at each layer $j$ that uses the output of the $m$-th R/A-Net++ as
\begin{equation}
\mathop g\nolimits_{i, [m]}^{(j)} = \sigma( \alpha_{[m]}^{(j)} \mathop p\nolimits_{i, [m]}^{(t)} + \beta_{[m]}^{(j)} ),
\end{equation}
where $\mathop g\nolimits_{i, [m]}^{(j)} \in \mathop {\mathcal R}\nolimits^{1}$.
$\alpha_{[m]}^{(j)} \in \mathop {\mathcal R}\nolimits^{1}$ and $\beta_{[m]}^{(j)} \in \mathop {\mathcal R}\nolimits^{1}$ are learnable parameters that control the value of $g_{[m]}$, which are designed to indicate the ratio of learning each rejection/approval task under strategy $f_m$ for the learning of default/non-default.
Here, we consider the activated output of each R/A-Net++, i.e. the probability of rejection by each strategy, determines how much parameters of each R/A-Net++ are used for the default/non-default task.
Except the final layer, the latent representation of each layer of the D/N-Net++ can be denoted as
\begin{equation}
\mathop q\nolimits_i^{(j)} = relu(\mathop q\nolimits_i^{(j-1)} \mathop w\nolimits_D^{(j)} + \mathop b\nolimits_D^{(j)}) + \sum_{m=1}^{M} \mathop g\nolimits_{i, [m]}^{(j)} \mathop p\nolimits_{i, [m]}^{(j)},
\end{equation}
where $\mathop q\nolimits_i^{(j)} \in \mathop {\mathcal R}\nolimits^{h_j}$.
And the final output is denoted as
\begin{equation}
\mathop q\nolimits_i^{(t)} = \sigma( \mathop q\nolimits_i^{(t-1)} \mathop w\nolimits_D^{(t)} + \mathop b\nolimits_D^{(t)} ),
\end{equation}
where $\mathop q\nolimits_i^{(t)} \in \mathop {\mathcal R}\nolimits^{1}$ means the default probability.

\subsection{Loss Function}
For the rejection/approval task, given the rejection/approval label $r_i$ and the outputs of the R/A-Net++, we have
\begin{equation}
\small
% \begin{array}{l}
\mathcal{L}_1 =  - \sum\limits_{m = 1}^M {\sum\limits_{i = 1,{f_{{s_i}}} = {f_m}}^N {p_{i,[m]}^{(t)}} } log({r_i}) + (1 - p_{i,[m]}^{(t)})log(1 - {r_i}).
% \end{array}
\end{equation}

For the default/non-default prediction task, given the default/non-default label $y_i$, the rejection/approval label $r_i$, and the output of the D/N-Net++ $\mathop q\nolimits_i^{(t)}$, we have
\begin{equation}
\mathcal{L}_2 = -\sum_{i=1}^N (\mathop q\nolimits_i^{(t)} log (y_i) + (1- \mathop q\nolimits_i^{(t)}) log (1-y_i)) (1-r_i).
\end{equation}

With $\eta$ as a hyperparameter to balance the two losses, we denote the overall loss function as
\begin{equation}
\mathcal{L} = (1-\eta) \cdot \mathcal{L}_1  / M + \eta \cdot \mathcal{L}_2.
\end{equation}

\section{Experiments}
In this section, we conduct empirical experiments to verify the effectiveness of RMT-Net and RMT-NET++.

\subsection{Datasets}

In our experiments, we are going to verify our proposed approaches from three aspects: experiments on approved samples, experiments on both approved and rejected samples, experiments under multiple rejection/approval policies.

\begin{table}[t]
  \centering
  \caption{Details of datasets with single rejection/approval policy.}
    \begin{tabular}{cccc}
    \toprule
    \multirow{2}[2]{*}{Dataset} & \multirow{2}[2]{*}{Rejection Ratio (\%)} & \multicolumn{2}{c}{Default Ratio (\%)} \\
          &       & Approved & Rejected \\
    \midrule
    Lending1 & 84.00  & 15.40  & - \\
    Lending2 & 87.63  & 17.36  & - \\
    Lending3 & 54.86  & 18.08  & - \\
    Home1 & 75.00  & 1.97  & 10.06  \\
    Home2 & 75.00  & 2.85  & 9.69  \\
    PPD1  & 75.00  & 4.40  & 15.90  \\
    PPD2  & 75.00  & 2.20  & 9.04  \\
    \bottomrule
    \end{tabular}%
  \label{tab:data1}%
\end{table}%

\begin{table}[t]
\caption{Details of datasets with multiple rejection/approval policies.}
\centering
 
\subtable[The Lending-M dataset.]{
    \begin{tabular}{cccc}
    \toprule
    \multirow{2}[2]{*}{Policy} & \multirow{2}[2]{*}{Rejection Ratio (\%)} & \multicolumn{2}{c}{Default Ratio (\%)} \\
          &       & Approved & Rejected \\
    \midrule
    1     & 84.00  & 15.40  & - \\
    2     & 87.63  & 17.36  & - \\
    3     & 54.86  & 18.08  & - \\
    \bottomrule
    \end{tabular}%
}
 
\qquad
 
\subtable[The Home-M dataset.]{
    \begin{tabular}{cccc}
    \toprule
    \multirow{2}[2]{*}{Policy} & \multirow{2}[2]{*}{Rejection Ratio (\%)} & \multicolumn{2}{c}{Default Ratio (\%)} \\
          &       & Approved & Rejected \\
    \midrule
    1     & 75.00  & 3.54  & 9.45  \\
    2     & 75.00  & 2.60  & 9.84  \\
    \bottomrule
    \end{tabular}%
}
 
\qquad
 
\subtable[The PPD-M dataset.]{
    \begin{tabular}{cccc}
    \toprule
    \multirow{2}[2]{*}{Policy} & \multirow{2}[2]{*}{Rejection Ratio (\%)} & \multicolumn{2}{c}{Default Ratio (\%)} \\
          &       & Approved & Rejected \\
    \midrule
    1     & 75.00  & 2.92  & 8.63  \\
    2     & 75.00  & 3.56  & 8.53  \\
    \bottomrule
    \end{tabular}%
}
\label{tab:data2}%
\end{table}

\begin{table*}[t]
  \centering
  \caption{Performance comparison on approval-only datasets, evaluated by AUC (\%) and KS(\%). Average values are also listed. $*$ denotes statistically significant improvement, measured by t-test with p-value$<0.01$, over the second-best approach on each dataset.}
    \begin{tabular}{cc|cccccccc}
    \toprule
    \multirow{2}[2]{*}{Type} & \multirow{2}[2]{*}{Approach} & \multicolumn{2}{c}{Lending1} & \multicolumn{2}{c}{Lending2} & \multicolumn{2}{c}{Lending3} & \multicolumn{2}{c}{average} \\
          &       & AUC   & KS    & AUC   & KS    & AUC   & KS    & AUC   & KS \\
    \midrule
          & LR    & 59.94  & 13.53  & 60.52  & 15.81  & 61.67  & 17.43  & 60.71  & 15.59  \\
    Baseline & MLP   & 59.96  & 13.77  & 60.50  & 15.72  & 61.54  & 17.53  & 60.67  & 15.67  \\
          & XGB   & 59.93  & 13.62  & 60.54  & 15.69  & 61.34  & 17.18  & 60.60  & 15.50  \\
    \midrule
          & ST+LR & 60.18  & 14.02  & 60.49  & 15.73  & 61.87  & 17.86  & 60.85  & 15.87  \\
    Semi-Supervised & ST+MLP & 60.13  & 13.89  & 60.54  & 15.89  & 61.71  & 17.78  & 60.79  & 15.85  \\
    Learning & ST+XGB & 60.19  & 14.11  & 60.51  & 15.86  & 61.83  & 17.89  & 60.84  & 15.95  \\
          & SS-GMM & 60.09  & 13.75  & 60.55  & 15.78  & 62.06  & 18.24  & 60.90  & 15.92  \\
    \midrule
          & IPS+LR & 60.21  & 13.45  & 60.36  & 15.52  & 61.53  & 17.59  & 60.70  & 15.52  \\
          & IPS+MLP & 60.25  & 14.58  & 60.44  & 16.00  & 61.59  & 17.63  & 60.76  & 16.07  \\
          & DR+LR & 60.18  & 13.69  & 60.32  & 15.36  & 61.37  & 17.41  & 60.62  & 15.49  \\
          & DR+MLP & 60.22  & 14.34  & 60.48  & 15.72  & 61.68  & 17.72  & 60.79  & 15.93  \\
    Counterfactual & DRJL+LR & 60.28  & 14.53  & 60.41  & 15.52  & 61.42  & 17.58  & 60.70  & 15.88  \\
    Leaning & DRJL+MLP & 60.25  & 14.59  & 60.54  & 16.10  & 61.73  & 17.81  & 60.84  & 16.17  \\
          & ACL+LR & 60.31  & 14.46  & 60.49  & 15.75  & 61.79  & 17.88  & 60.86  & 16.03  \\
          & ACL+MLP & 60.34  & 14.63  & 60.59  & 15.86  & 61.85  & 17.94  & 60.93  & 16.14  \\
          & SRDO+LR & 60.24  & 14.03  & 60.56  & 15.74  & 61.58  & 17.59  & 60.79  & 15.79  \\
          & SRDO+MLP & 60.13  & 13.59  & 60.39  & 15.82  & 61.47  & 17.47  & 60.66  & 15.63  \\
    \midrule
          & Cross-Stitch & 60.33  & 14.51  & 60.76  & 16.54  & 62.17  & 18.56  & 61.09  & 16.54  \\
    Multi-Task & MMOE  & 60.24  & 14.63  & 60.62  & 16.17  & 62.09  & 18.31  & 60.98  & 16.37  \\
    Learning & PLE   & 60.32  & 14.41  & 60.71  & 16.45  & 61.97  & 18.04  & 61.00  & 16.30  \\
          & RMT-Net & $\ \ $\textbf{60.61}$^*$ & $\ \ $\textbf{15.35}$^*$ & $\ \ $\textbf{61.02}$^*$ & $\ \ $\textbf{17.08}$^*$ & $\ \ $\textbf{62.48}$^*$ & $\ \ $\textbf{18.97}$^*$ & $\ \ $\textbf{61.37}$^*$ & $\ \ $\textbf{17.13}$^*$ \\
    \bottomrule
    \end{tabular}%
  \label{tab:comparison1}%
\end{table*}%

\textbf{Approval-only datasets}:
Firstly, we need to investigate whether our proposed approaches can improve the performances on approved samples.
This means, training set and testing set, which are both approved samples, share similar data distributions.
We select datasets with real rejected samples, but without ground-truth default/non-default labels for rejected samples.
From the Lending Club dataset, we extract samples in 2013, 2014 and 2015 to construct the \textbf{Lending1} dataset, the \textbf{Lending2} dataset and the \textbf{Lengding3} dataset respectively.
% The reason we only select samples in 2013 and 2014 is that the rejection ratios in these two years are much higher, so the data bias in the training set is bigger.
% In this way, performance comparison among different approaches will be clearer.
In our experiments, we randomly use $60\%$, $20\%$, and $20\%$ approved samples as training, validation, and testing set respectively.
Features of rejected samples can be used during the training of credit scoring models.

\textbf{Approval-rejection datasets}:
Secondly, we investigate whether our proposed approaches can stably achieve promising performances in different data distributions.
This requires us to conduct experiments on both approved and rejected samples.
However, ground-truth default/non-default labels for real rejected samples are hard to obtain, so that we need to generate synthetic rejected samples from some real-world credit scoring datasets.
Here, we incorporate the Home\footnote{\url{https://www.kaggle.com/c/home-credit-default-risk}} and PPD\footnote{\url{https://www.kesci.com/home/competition/56cd5f02b89b5bd026cb39c9/content/1}} datasets.
To be noted, samples in these two datasets are actually all approved samples with ground-truth default/non-default labels.
The process of generating synthetic rejected samples is conducted as follows:
(1) use random $1/3$ samples (denoted initial samples) and $\varepsilon \cdot d$ features to train an LR model as the synthetic rejection/approval policy of the credit scoring system, where $\varepsilon$ is a ratio that $0\% < \varepsilon \le 100\%$, and $d$ is the dimensionality of input features;
(2) use the trained LR model to predict the default probabilities of the rest $2/3$ samples (denoted as main samples);
(3) assign $3/4$ of the main samples with largest default probabilities as synthetic rejected samples;
(4) assign $1/4$ of the main samples with smallest default probabilities as approved samples.
This process is very similar to a real-world credit scoring system, which uses some initial loan applications to train a machine learning model for future decisions of rejection/approval.
$\varepsilon$ is to control the strength of rejection/approval.
With larger $\varepsilon$ values, more features will be used, and data distributions between rejected and approved samples will be more distinguishable.
On each of Home and PPD, we run the above synthetic process two times and set $\varepsilon=100\%$ and $\varepsilon=50\%$ respectively.
This results in four approval-rejection datasets: \textbf{Home1}, \textbf{Home2}, \textbf{PPD1} and \textbf{PPD2}.
On each dataset, we randomly use $60\%$, $20\%$, and $20\%$ approved samples as training, validation, and testing set respectively.
Meanwhile, the testing set also contains all the rejected samples to conduct performance comparison across different data distributions.
Moreover, features of rejected samples are also used during the training of credit scoring models, but the ground-truth default-non-default labels cannot be used during training.

\textbf{Multi-policy datasets}:
Thirdly, we need to verify the effectiveness of our proposed approaches under multiple policies.
In the Lending Club dataset, we regard the policies in 2013, 2014 and 2015 as three different policies and obtain a multi-policy dataset named \textbf{Lending-M}.
For Home and PPD, we split each dataset into two equal subsets, and run the synthetic rejected sample generation process on each sub-set with $\varepsilon=50\%$.
This results in two different rejection/approval policies on each dataset.
Thus, we obtain two multi-policy datasets named \textbf{Home-M} and \textbf{PPD-M}.
The training/validation/testing split of Lending-M is the same as that of previous approval-only datasets.
The training/validation/testing split of Home-M and PPD-M is the same as that of previous approval-rejection datasets.
That is to say, there are only approved samples in Lending-M, and both approved and synthetic rejected samples in Home-M and PPD-M.

\begin{table*}[t]
  \centering
  \caption{Performance comparison on approval-rejection datasets, evaluated by AUC (\%) and KS(\%). Average values are also listed. $*$ denotes statistically significant improvement, measured by t-test with p-value$<0.01$, over the second-best approach on each dataset.}
    \begin{tabular}{cc|cccccccccc}
    \toprule
    \multirow{2}[2]{*}{Type} & \multirow{2}[2]{*}{Approach} & \multicolumn{2}{c}{Home1} & \multicolumn{2}{c}{Home2} & \multicolumn{2}{c}{PPD1} & \multicolumn{2}{c}{PPD2} & \multicolumn{2}{c}{Average} \\
          &       & AUC   & KS    & AUC   & KS    & AUC   & KS    & AUC   & KS    & AUC   & KS \\
    \midrule
          & LR    & 54.87  & 7.42  & 68.05  & 26.46  & 62.83  & 19.88  & 58.80  & 13.41  & 61.14  & 16.79  \\
    Baseline & MLP   & 55.17  & 7.63  & 67.72  & 25.82  & 60.80  & 16.54  & 58.63  & 12.75  & 60.58  & 15.69  \\
          & XGB   & 55.60  & 8.12  & 67.81  & 26.07  & 63.21  & 20.93  & 59.32  & 14.19  & 61.49  & 17.33  \\
    \midrule
          & ST+LR & 66.12  & 23.96  & 67.88  & 27.71  & 66.22  & 23.90  & 64.57  & 21.11  & 66.20  & 24.17  \\
    Semi-Supervised & ST+MLP & 66.37  & 24.40  & 67.93  & 27.56  & 64.35  & 21.28  & 64.87  & 21.40  & 65.88  & 23.66  \\
    Learning & ST+XGB & 66.60  & 24.58  & 67.77  & 27.23  & 66.58  & 24.71  & 64.79  & 21.22  & 66.44  & 24.44  \\
          & SS-GMM & 66.21  & 23.61  & 68.59  & 27.71  & 67.12  & 25.93  & 64.50  & 20.96  & 66.61  & 24.55  \\
    \midrule
          & IPS+LR & 66.26  & 24.43  & 67.24  & 24.99  & 69.20  & 28.43  & 63.73  & 19.98  & 66.61  & 24.46  \\
          & IPS+MLP & 66.37  & 24.59  & 67.93  & 25.87  & 69.88  & 29.32  & 63.87  & 20.26  & 67.01  & 25.01  \\
          & DR+LR & 65.72  & 24.14  & 67.66  & 25.23  & 69.66  & 29.64  & 63.49  & 19.95  & 66.63  & 24.74  \\
          & DR+MLP & 65.87  & 24.20  & 67.92  & 25.93  & 69.43  & 29.39  & 64.26  & 21.19  & 66.87  & 25.18  \\
    Counterfactual & DRJL+LR & 66.67  & 24.61  & 68.42  & 27.11  & 69.31  & 29.10  & 64.20  & 21.27  & 67.15  & 25.52  \\
    Leaning & DRJL+MLP & 66.30  & 24.42  & 68.68  & 27.97  & 69.57  & 29.71  & 64.59  & 21.58  & 67.29  & 25.92  \\
          & ACL+LR & 65.97  & 23.56  & 67.81  & 25.09  & 68.49  & 27.54  & 62.86  & 19.11  & 66.28  & 23.83  \\
          & ACL+MLP & 66.59  & 24.23  & 67.91  & 25.40  & 69.30  & 27.69  & 63.69  & 19.77  & 66.87  & 24.27  \\
          & SRDO+LR & 66.14  & 24.53  & 68.26  & 26.13  & 69.47  & 28.60  & 63.41  & 19.74  & 66.82  & 24.75  \\
          & SRDO+MLP & 65.98  & 24.20  & 67.58  & 25.49  & 69.62  & 28.49  & 63.66  & 19.90  & 66.71  & 24.52  \\
    \midrule
          & Cross-Stitch & 64.33  & 21.96  & 66.96  & 23.81  & 66.71  & 26.63  & 60.93  & 17.06  & 64.73  & 22.37  \\
    Multi-Task & MMOE  & 56.31  & 14.39  & 63.55  & 22.49  & 66.09  & 24.53  & 56.17  & 10.54  & 60.53  & 17.99  \\
    Learning & PLE   & 57.63  & 15.70  & 63.90  & 21.51  & 65.89  & 24.61  & 54.62  & 9.16  & 60.51  & 17.75  \\
          & RMT-Net & $\ \ $\textbf{71.03}$^*$ & $\ \ $\textbf{30.84}$^*$ & $\ \ $\textbf{71.99}$^*$ & $\ \ $\textbf{32.40}$^*$ & $\ \ $\textbf{71.00}$^*$ & $\ \ $\textbf{30.30}$^*$ & $\ \ $\textbf{69.44}$^*$ & $\ \ $\textbf{29.00}$^*$ & $\ \ $\textbf{70.87}$^*$ & $\ \ $\textbf{30.64}$^*$ \\
    \bottomrule
    \end{tabular}%
  \label{tab:comparison2}%
\end{table*}%

Approval-only datasets and approval-rejection datasets are both datasets with a single rejection/approval policy, and their details are shown in Tab. \ref{tab:data1}.
And details of multi-policy datasets are illustrated in Tab. \ref{tab:data2}.
As in real systems, features in above datasets are constructed based on records before the time of each loan application.
This makes us available to predict whether a customer will default if he or she gets the loan, for we need to know the default probability before the loan approval.
If a customer applies for loan for more than one time, there will exist multiple samples in the data, and each of them corresponds to one application, with features constructed based on records before the corresponding application time.

\subsection{Settings}

We compare four types of approaches: baselines, semi-supervised learning approaches, counterfactual learning approaches and multi-task learning approaches.

For single-policy datasets, i.e., approval-only datasets and approval-rejection datasets, the following approaches are compared.
Baselines consists of three commonly-used classifiers for credit scoring: \textbf{LR}, \textbf{MLP} and \textbf{XGB}.
Among semi-supervised approaches, Self-Training (\textbf{ST}) \cite{maldonado2010semi} is incorporated with LR, MLP and XGB.
Another typical semi-supervised reject inference approach \textbf{SS-GMM} \cite{mancisidor2020deep} is also compared.
For counterfactual approaches, we involve \textbf{IPS} \cite{schnabel2016recommendations}\cite{swaminathan2015self}, \textbf{DR} \cite{jiang2016doubly}, \textbf{DRJL} \cite{wang2019doubly}, \textbf{ACL} \cite{xu2020adversarial} and \textbf{SRDO} \cite{shen2020stable}, and incorporate them with LR and MLP.
For multi-task approaches, besides our proposed \textbf{RMT-Net}, we involve \textbf{Cross-Stitch} \cite{misra2016cross}, \textbf{MMOE} \cite{ma2018modeling} and \textbf{PLE} \cite{tang2020progressive}.

For multi-policy datasets, we use the same  baselines, semi-supervised approaches and counterfactual approaches as above.
The only difference is that we consider multiple policies in multi-task approaches.
Specifically, we adjust Cross-Stitch, MMOE and PLE under multiple policies, and name them as \textbf{Cross-Stitch-M}, \textbf{MMOE-M} and \textbf{PLE-M}.
Moreover, our proposed \textbf{RMT-Net++} is also compared.

For our proposed RMT-Net and RMT-Net++, we empirically set learning rate as $0.001$, embedding dimensionality of each input feature as $4$, and dimensionality of hidden layers as $16$.
Meanwhile, according to the best performances on the validation set, we tune the loss balancing parameter $\lambda$ in the range of $[0.1,0.2,0.3,0.4,0.5]$, and the layer number $t$ in the range of $[2,3,4]$.
For other compared approaches, their hyper-parameters are tuned according to the performances on the validation set.
For all compared approaches including RMT-Net and RMT-Net++, early-stopping is conducted according to the performances on the validation set.
We run each approach $10$ times and report the median values.

We evaluate the performances on testing sets in terms of two commonly-used metrics for credit scoring and default prediction: \textbf{AUC} and \textbf{KS}.
AUC measures the overall ranking performance.
KS measures the largest difference between true positive rate and false positive rate on the ROC curve, which can be the threshold for loan approval.

\begin{table*}[t]
  \centering
  \caption{Performance comparison on multi-policy datasets, evaluated by AUC (\%) and KS(\%). Average values are also listed. $*$ denotes statistically significant improvement, measured by t-test with p-value$<0.01$, over the second-best approach on each dataset.}
    \begin{tabular}{cc|cccccccc}
    \toprule
    \multirow{2}[2]{*}{Type} & \multirow{2}[2]{*}{Approach} & \multicolumn{2}{c}{Lending-M} & \multicolumn{2}{c}{Home-M} & \multicolumn{2}{c}{PPD-M} & \multicolumn{2}{c}{Average} \\
          &       & AUC   & KS    & AUC   & KS    & AUC   & KS    & AUC   & KS \\
    \midrule
          & LR    & 60.66  & 15.43  & 70.31  & 29.47  & 62.66  & 18.83  & 64.54  & 21.24  \\
    Baseline & MLP   & 60.59  & 15.48  & 70.03  & 29.20  & 62.44  & 19.27  & 64.35  & 21.32  \\
          & XGB   & 60.69  & 15.53  & 70.11  & 29.32  & 62.76  & 19.54  & 64.52  & 21.46  \\
    \midrule
          & ST+LR & 60.59  & 15.47  & 70.53  & 30.29  & 65.66  & 23.89  & 65.59  & 23.22  \\
    Semi-Supervised & ST+MLP & 60.76  & 15.51  & 70.28  & 29.67  & 66.26  & 24.68  & 65.77  & 23.29  \\
    Learning & ST+XGB & 60.63  & 15.42  & 70.11  & 29.39  & 65.89  & 24.11  & 65.54  & 22.97  \\
          & SS-GMM & 60.81  & 15.70  & 70.47  & 29.88  & 66.08  & 24.57  & 65.79  & 23.38  \\
    \midrule
          & IPS+LR & 60.74  & 15.76  & 68.36  & 27.01  & 61.89  & 18.28  & 63.66  & 20.35  \\
          & IPS+MLP & 60.82  & 15.68  & 70.12  & 29.65  & 62.97  & 19.42  & 64.64  & 21.58  \\
          & DR+LR & 60.71  & 15.55  & 67.81  & 26.50  & 65.23  & 23.59  & 64.58  & 21.88  \\
          & DR+MLP & 60.78  & 15.59  & 69.72  & 28.96  & 65.76  & 24.13  & 65.42  & 22.89  \\
    Counterfactual & DRJL+LR & 60.76  & 15.86  & 69.44  & 28.51  & 67.17  & 25.70  & 65.79  & 23.36  \\
    Leaning & DRJL+MLP & 60.89  & 15.89  & 70.52  & 30.23  & 67.42  & 26.32  & 66.28  & 24.15  \\
          & ACL+LR & 60.76  & 15.80  & 70.50  & 30.05  & 65.78  & 23.49  & 65.68  & 23.11  \\
          & ACL+MLP & 60.71  & 16.68  & 70.47  & 29.95  & 66.07  & 23.90  & 65.75  & 23.51  \\
          & SRDO+LR & 60.85  & 15.90  & 70.06  & 29.50  & 63.22  & 20.12  & 64.71  & 21.84  \\
          & SRDO+MLP & 60.79  & 15.56  & 69.56  & 29.11  & 62.07  & 18.80  & 64.14  & 21.16  \\
    \midrule
          & Cross-Stitch-M & 61.10  & 16.40  & 70.59  & 30.33  & 66.86  & 25.18  & 66.18  & 23.97  \\
    Multi-Task & MMOE-M & 61.07  & 16.52  & 70.55  & 30.29  & 64.81  & 21.87  & 65.48  & 22.89  \\
    Learning & PLE-M & 61.16  & 16.61  & 70.62  & 30.45  & 63.52  & 20.94  & 65.10  & 22.67  \\
          & RMT-Net & 61.28  & 16.84  & 71.01  & 30.84  & 68.86  & 28.11  & 67.05  & 25.26  \\
          & RMT-Net++ & $\ \ $\textbf{61.60}$^*$ & $\ \ $\textbf{17.43}$^*$ & $\ \ $\textbf{71.96}$^*$ & $\ \ $\textbf{32.28}$^*$ & $\ \ $\textbf{70.41}$^*$ & $\ \ $\textbf{30.50}$^*$ & $\ \ $\textbf{67.99}$^*$ & $\ \ $\textbf{26.74}$^*$ \\
    \bottomrule
    \end{tabular}%
  \label{tab:comparison3}%
\end{table*}%

\subsection{Performance Comparison} \label{sec:comparison}

In our experiments, the performance comparison is conducted from three perspectives.

Firstly, we need to conduct performance comparison on approved samples.
Tab. \ref{tab:comparison1} illustrates performance comparison on approval-only datasets.
Overall speaking, semi-supervised, counterfactual and multi-task approaches achieve relative improvements compared with baselines.
This means, these approaches are effective for credit scoring, even when training and testing samples share similar data distributions.
Moreover, it is clear that RMT-Net achieves the best performances.
On average, RMT-Net relatively improves baselines by $9.32\%$, and improves the second-best compared approach by $3.61\%$, evaluated by KS.

Secondly, we need to investigate whether our proposed approaches can stably achieve promising performances in different data distributions.
This requires us to conduct experiments on both approved and rejected samples.
This is a very important experiment, for we strongly need credit scoring models that can stably and accurately infer credits of loan applications in feature distributions of both approved and rejected samples in real-world applications.
Tab. \ref{tab:comparison2} illustrates performance comparison on approval-rejection datasets.
It is clear that baselines achieve poor performances, due to the distribution shift between training and testing samples.
Both semi-supervised and counterfactual approaches significantly outperform baselines, and counterfactual approaches slightly outperform semi-supervised approaches.
Meanwhile, multi-task learning approaches except RMT-Net perform poorly, and some of them are even worse than baselines.
This could be because some task weights are not well learned, since we have no observed default/non-default labels for rejected samples during model training.
In the feature distribution of rejected samples, there is no supervision for optimizing the task weights to control how much information is shared from the rejection/approval task to the default/non-default task.
Instead, RMT-Net clearly achieves such supervision for optimizing the task weights with the best performances on all datasets.
Comparing with Cross-Stitch, MMOE, and PLE, RMT-Net relatively improves KS by $36.97\%$, $70.32\%$, and $72.62\%$ on average respectively.
Moreover, on average, RMT-Net relatively improves baselines by $76.80\%$, and improves the second-best compared approach by $18.21\%$, evaluated by KS.
These improvements are much larger than those on approval-only datasets.
This is because rejected samples are evaluated during testing. With a better learning of the default/non-default task for rejected samples, RMT-Net provides bigger room for improvements.

Thirdly, we also need to conduct performance comparison under multiple policies.
Tab. \ref{tab:comparison3} shows performance comparison on multi-policy datasets.
We can clearly observe that RMT-Net++ can further improve the performances of RMT-Net.
On average, RMT-Net++ relatively improves RMT-Net by $5.83\%$, evaluated by KS.

These experimental results strongly verify the effectiveness of our proposed RMT-Net and RMT-Net++.

\subsection{Hyper-parameter Study} \label{sec:params}

Furthermore, we are going to investigate the impact of hyper-parameters in our proposed RMT-Net and RMT-Net++.
In Fig. \ref{fig:params}, we illustrate the performances of RMT-Net and RMT-Net++ on testing set with varying loss balancing parameter $\lambda$ and layer number $t$.

\begin{figure}
\centering
\includegraphics[width=0.48\textwidth]{./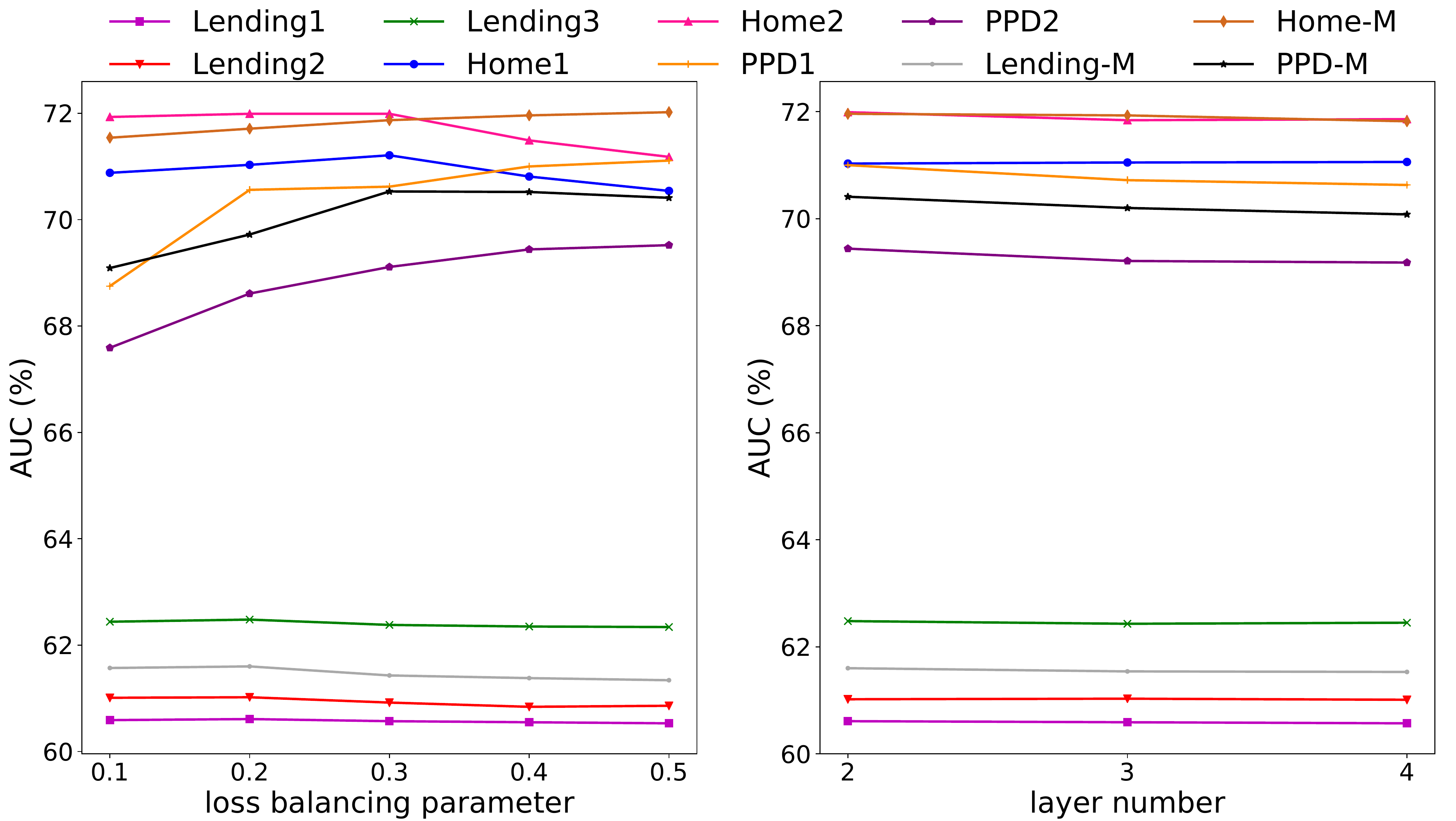}
\caption{Performances of RMT-Net and RMT-Net++ on testing set with varying hyper-parameters: (1) the left part shows the impact of the loss balancing parameter $\lambda$; (2) the right part shows the impact of the layer number $t$. To be noted, we report results of RMT-Net on single-policy dataset, and results of RMT-Net++ on multi-policy datasets.}
\label{fig:params}
\end{figure}

\begin{figure*}[t]
	\centering
	\subfigure[RMT-Net on approval-only datasets.]{
		\begin{minipage}[b]{0.45\textwidth}
			\includegraphics[width=1\textwidth]{./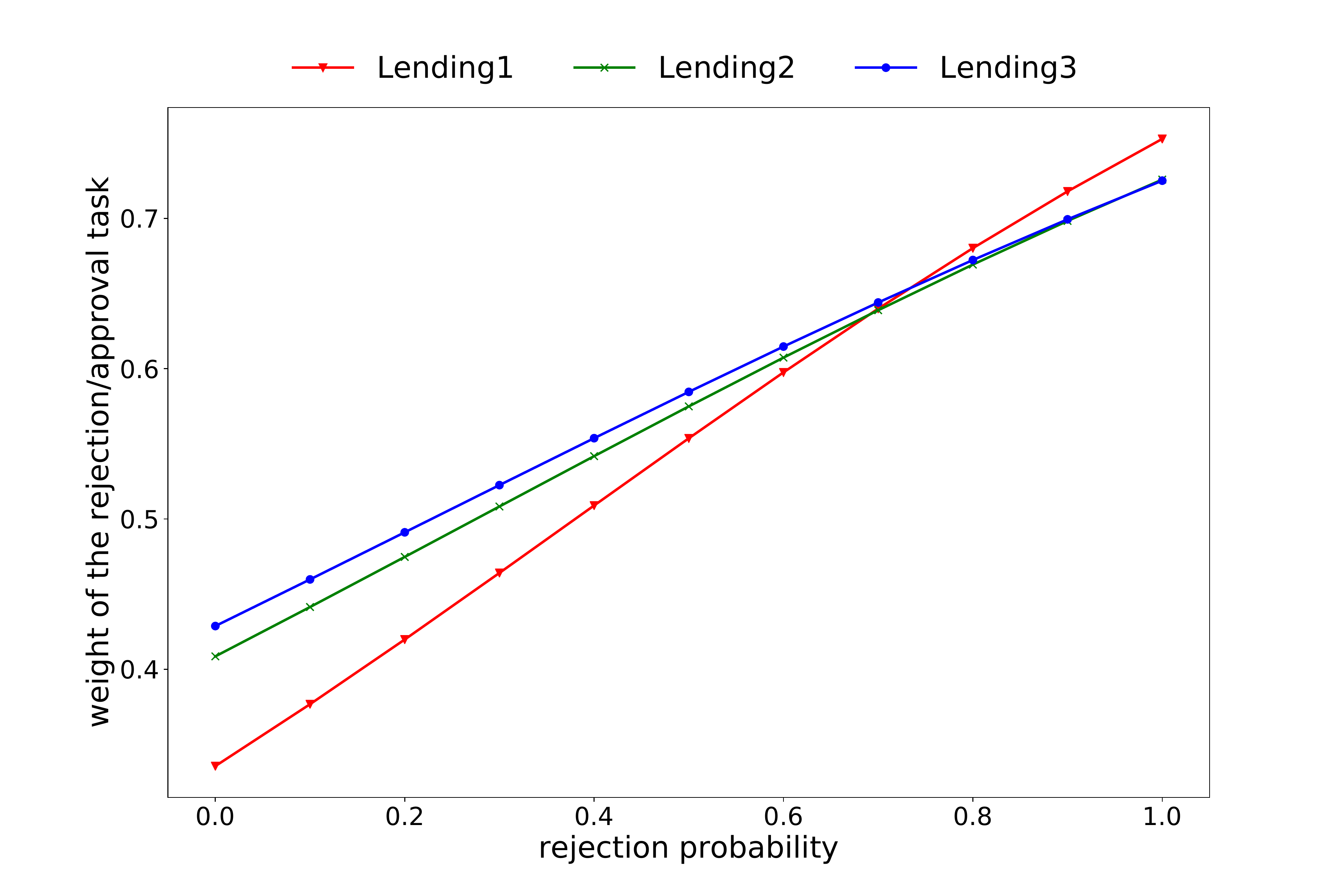}
		\end{minipage}
	}
	\subfigure[RMT-Net on approval-rejection datasets.]{
		\begin{minipage}[b]{0.45\textwidth}
			\includegraphics[width=1\textwidth]{./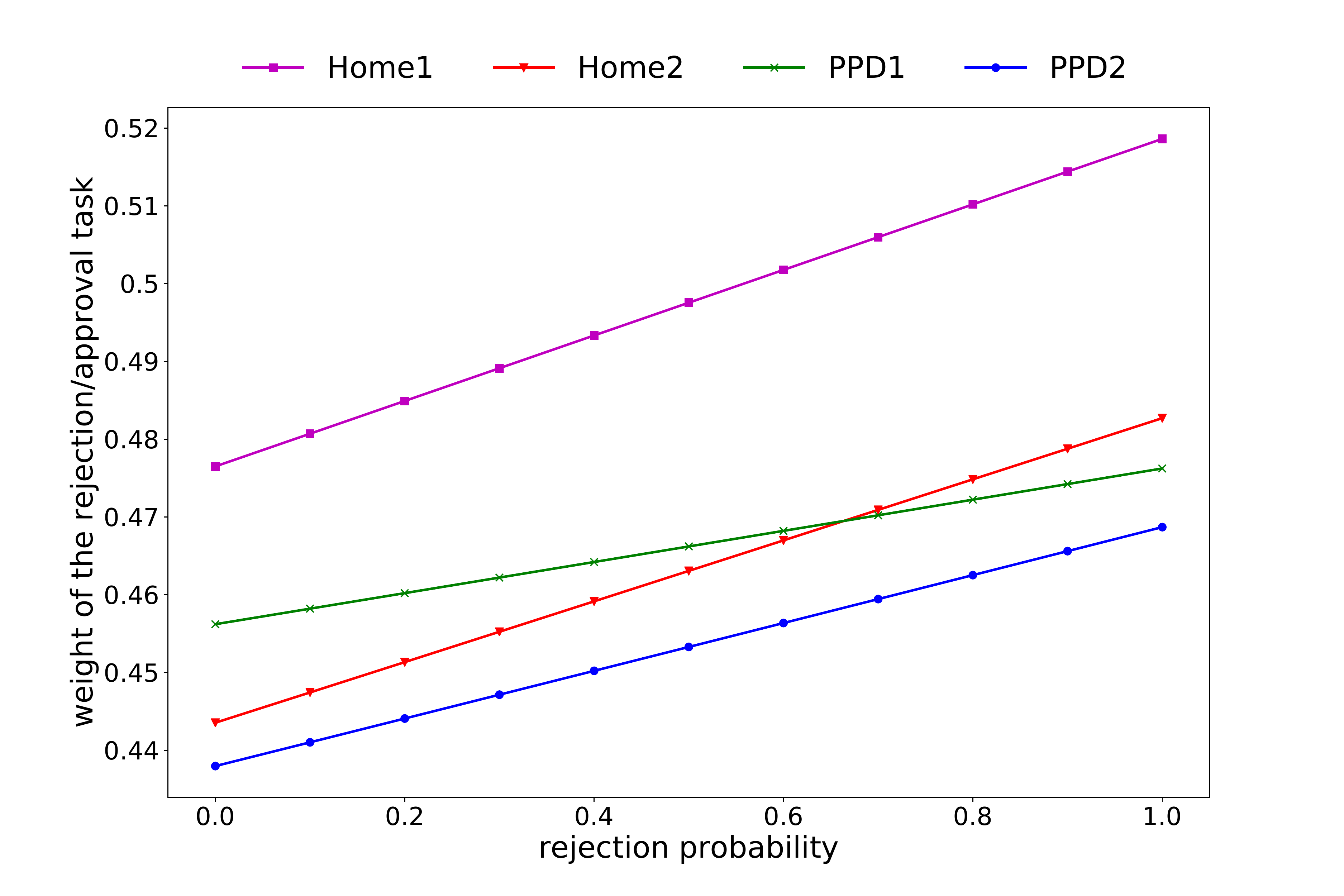}
		\end{minipage}
	}
	\subfigure[RMT-Net++ on the Lending-M dataset.]{
		\begin{minipage}[b]{0.45\textwidth}
			\includegraphics[width=1\textwidth]{./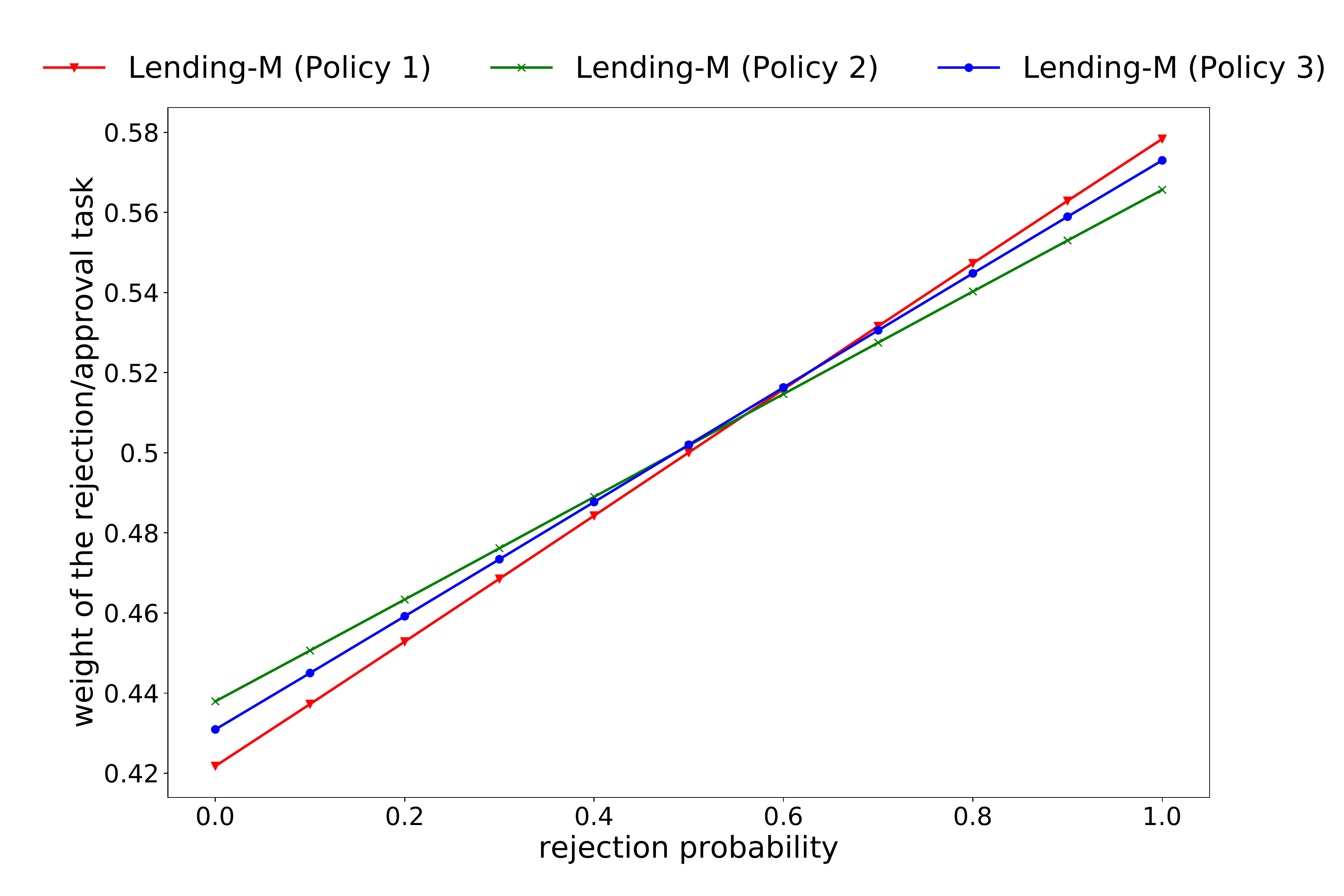}
		\end{minipage}
	}
	\subfigure[RMT-Net++ on the Home-M and PPD-M datasets.]{
		\begin{minipage}[b]{0.45\textwidth}
			\includegraphics[width=1\textwidth]{./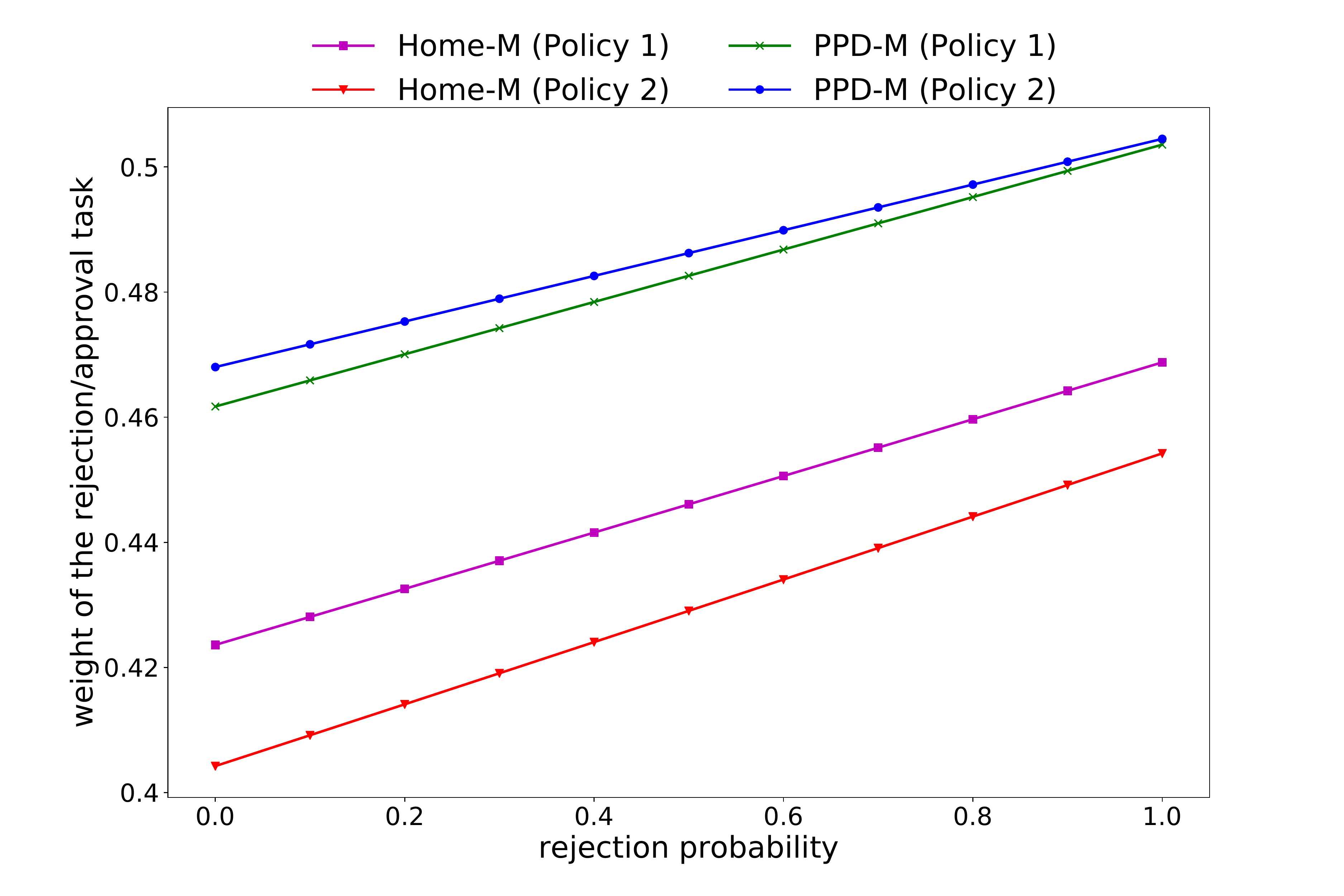}
		\end{minipage}
	}
	\caption{Visualization of gating networks in RMT-Net and RMT-Net++, in which we illustrate the relationship between the rejection probability and the weight of the rejection/approval task for the learning of the default/non-default task.}
	\label{fig:lines}
\end{figure*}

Firstly, the loss balancing parameter $\lambda$ somehow affects the performances of RMT-Net and RMT-Net++.
Thus, it is better for us to tune this hyper-parameter according to validation set for optimal performances.
In Sec. \ref{sec:comparison}, we report results on testing set via hyper-parameter tuning on validation set.
Moreover, the performances are not very sensitive to $\lambda$, and performance on each dataset stays stable in a range of loss balancing parameter $\lambda$.

Secondly, the layer number $t$ has very slight effects on the performances of RMT-Net and RMT-Net++.
Thus, we do not need to carefully tune this hyper-parameter.
To be noted, $t$ includes the final layer in MLP, and the minimum value of $t$ is $2$.
This means, $t$ layers indicate we have $t-1$ hidden layers in RMT-Net and RMT-Net++ for each task.
If we set $t=1$, there will be no hidden layers in RMT-Net and RMT-Net++, and our design reject-aware multi-task learning framework will be invalid.
Accordingly, for simplicity, we set $t=2$ in rest of our experiments, and report according results on testing set in Sec. \ref{sec:comparison}.

\subsection{Visualization}

In Fig. \ref{fig:lines}, we illustrate the relationship between the rejection probability and the weight of the rejection/approval task for the learning of the default/non-default task in RMT-Net and RMT-Net++.
This demonstrates the status of gating networks.
For all datasets, the larger the rejection probability, the larger the weight of the rejection/approval task.
This means our proposed approaches learns that, with a larger rejection probability, less reliable information can be learned in the default/non-default network, and more information should be shared from the rejection/approval network.
With such a gating network, we can alleviate the under-fitting problem of conventional multi-task approaches in the feature distribution of rejected samples.

\section{Conclusion}

In this paper, we focus on modeling biased credit scoring data, in which we have only ground-truth labels for approved samples and no observations for rejected samples.
Such bias affects the reliability of default prediction, and we aim to improve the prediction accuracy on both approved and rejected samples. 
We find that the default/non-default classification task and the rejection/approval classification task are highly correlated in credit scoring applications, according to both real-world data study and theoretical analysis.
We for the first time propose to model biased credit scoring data using an MTL framework, and propose a novel RMT-Net approach, which learns the task weights that control the information sharing from the rejection/approval task to the default/non-default task by a gating network based on rejection probabilities.
According to empirical experiments on $10$ datasets under different settings, RMT-Net improves the poor performances of existing MTL approaches, and significantly outperforms several state-of-the-art approaches from different perspectives.
Furthermore, we extend RMT-Net to RMT-Net++ for modeling scenarios with multiple rejection/approval strategies.
According to an extra experiment, RMT-Net++ with multiple strategies can further improve the performances of RMT-Net in a more complex multi-policy scenario.

\section*{Acknowledgments}

The authors would like to thank the anonymous reviewers for their valuable comments and suggestions allowing them to improve the quality of this paper. 
This work is jointly sponsored by National Natural Science Foundation of China (U19B2038, 62141608) and CCF-AFSG Research Fund (20210001).

\ifCLASSOPTIONcaptionsoff
  \newpage
\fi

\bibliographystyle{IEEEtran}
\bibliography{IEEEtran.bib}

% Generated by IEEEtran.bst, version: 1.14 (2015/08/26)
\begin{thebibliography}{10}
\providecommand{\url}[1]{#1}
\csname url@samestyle\endcsname
\providecommand{\newblock}{\relax}
\providecommand{\bibinfo}[2]{#2}
\providecommand{\BIBentrySTDinterwordspacing}{\spaceskip=0pt\relax}
\providecommand{\BIBentryALTinterwordstretchfactor}{4}
\providecommand{\BIBentryALTinterwordspacing}{\spaceskip=\fontdimen2\font plus
\BIBentryALTinterwordstretchfactor\fontdimen3\font minus
  \fontdimen4\font\relax}
\providecommand{\BIBforeignlanguage}[2]{{%
\expandafter\ifx\csname l@#1\endcsname\relax
\typeout{** WARNING: IEEEtran.bst: No hyphenation pattern has been}%
\typeout{** loaded for the language `#1'. Using the pattern for}%
\typeout{** the default language instead.}%
\else
\language=\csname l@#1\endcsname
\fi
#2}}
\providecommand{\BIBdecl}{\relax}
\BIBdecl

\bibitem{west2000neural}
D.~West, ``Neural network credit scoring models,'' \emph{Computers \&
  Operations Research}, vol.~27, no. 11-12, pp. 1131--1152, 2000.

\bibitem{hu2020loan}
B.~Hu, Z.~Zhang, J.~Zhou, J.~Fang, Q.~Jia, Y.~Fang, Q.~Yu, and Y.~Qi, ``Loan
  default analysis with multiplex graph learning,'' in \emph{CIKM}, 2020, pp.
  2525--2532.

\bibitem{liu2020alike}
Y.~Liu, X.~Ao, Q.~Zhong, J.~Feng, J.~Tang, and Q.~He, ``Alike and unlike:
  Resolving class imbalance problem in financial credit risk assessment,'' in
  \emph{CIKM}, 2020, pp. 2125--2128.

\bibitem{liu2021dnn2lr}
Q.~Liu, Z.~Liu, H.~Zhang, Y.~Chen, and J.~Zhu, ``Mining cross features for
  financial credit risk assessment,'' in \emph{CIKM}, 2021, pp. 1069--1078.

\bibitem{babaev2019rnn}
D.~Babaev, M.~Savchenko, A.~Tuzhilin, and D.~Umerenkov, ``Et-rnn: Applying deep
  learning to credit loan applications,'' in \emph{KDD}, 2019.

\bibitem{li2017reject}
Z.~Li, Y.~Tian, K.~Li, F.~Zhou, and W.~Yang, ``Reject inference in credit
  scoring using semi-supervised support vector machines,'' \emph{Expert Systems
  with Applications}, vol.~74, pp. 105--114, 2017.

\bibitem{mancisidor2020deep}
R.~A. Mancisidor, M.~Kampffmeyer, K.~Aas, and R.~Jenssen, ``Deep generative
  models for reject inference in credit scoring,'' \emph{Knowledge-Based
  Systems}, 2020.

\bibitem{ehrhardt2021reject}
A.~Ehrhardt, C.~Biernacki, V.~Vandewalle, P.~Heinrich, and S.~Beben, ``Reject
  inference methods in credit scoring,'' \emph{Journal of Applied Statistics},
  pp. 1--21, 2021.

\bibitem{goel2021importance}
N.~Goel, A.~Amayuelas, A.~Deshpande, and A.~Sharma, ``The importance of
  modeling data missingness in algorithmic fairness: A causal perspective,'' in
  \emph{AAAI}, 2021, pp. 7564--7573.

\bibitem{marlin2009collaborative}
B.~M. Marlin and R.~S. Zemel, ``Collaborative prediction and ranking with
  non-random missing data,'' in \emph{RecSys}, 2009, pp. 5--12.

\bibitem{schnabel2016recommendations}
T.~Schnabel, A.~Swaminathan, A.~Singh, N.~Chandak, and T.~Joachims,
  ``Recommendations as treatments: Debiasing learning and evaluation,'' in
  \emph{ICML}, 2016, pp. 1670--1679.

\bibitem{bucker2013reject}
M.~B{\"u}cker, M.~van Kampen, and W.~Kr{\"a}mer, ``Reject inference in consumer
  credit scoring with nonignorable missing data,'' \emph{Journal of Banking \&
  Finance}, vol.~37, no.~3, pp. 1040--1045, 2013.

\bibitem{chen2001economic}
G.~G. Chen and T.~Astebro, ``The economic value of reject inference in credit
  scoring,'' \emph{Department of Management Science, University of Waterloo},
  2001.

\bibitem{nguyen2016reject}
H.-T. Nguyen \emph{et~al.}, ``Reject inference in application scorecards:
  evidence from france,'' University of Paris Nanterre, EconomiX, Tech. Rep.,
  2016.

\bibitem{hand1993can}
D.~J. Hand and W.~E. Henley, ``Can reject inference ever work?'' \emph{IMA
  Journal of Management Mathematics}, vol.~5, no.~1, pp. 45--55, 1993.

\bibitem{agrawala1970learning}
A.~Agrawala, ``Learning with a probabilistic teacher,'' \emph{IEEE Transactions
  on Information Theory}, vol.~16, no.~4, pp. 373--379, 1970.

\bibitem{maldonado2010semi}
S.~Maldonado and G.~Paredes, ``A semi-supervised approach for reject inference
  in credit scoring using svms,'' in \emph{ICDM}, 2010, pp. 558--571.

\bibitem{zhu2009introduction}
X.~Zhu and A.~B. Goldberg, ``Introduction to semi-supervised learning,''
  \emph{Synthesis lectures on artificial intelligence and machine learning},
  vol.~3, no.~1, pp. 1--130, 2009.

\bibitem{hsia1978credit}
D.~C. Hsia, ``Credit scoring and the equal credit opportunity act,''
  \emph{Hastings LJ}, vol.~30, p. 371, 1978.

\bibitem{banasik2007reject}
J.~Banasik and J.~Crook, ``Reject inference, augmentation, and sample
  selection,'' \emph{European Journal of Operational Research}, vol. 183,
  no.~3, pp. 1582--1594, 2007.

\bibitem{banasik2003sample}
J.~Banasik, J.~Crook, and L.~Thomas, ``Sample selection bias in credit scoring
  models,'' \emph{Journal of the Operational Research Society}, vol.~54, no.~8,
  pp. 822--832, 2003.

\bibitem{hassanpour2019counterfactual}
N.~Hassanpour and R.~Greiner, ``Counterfactual regression with importance
  sampling weights.'' in \emph{IJCAI}, 2019, pp. 5880--5887.

\bibitem{zou2020counterfactual}
H.~Zou, P.~Cui, B.~Li, Z.~Shen, J.~Ma, H.~Yang, and Y.~He, ``Counterfactual
  prediction for bundle treatment,'' \emph{NeurIPS}, 2020.

\bibitem{caruana1997multitask}
R.~Caruana, ``Multitask learning,'' \emph{Machine learning}, vol.~28, no.~1,
  pp. 41--75, 1997.

\bibitem{ma2018modeling}
J.~Ma, Z.~Zhao, X.~Yi, J.~Chen, L.~Hong, and E.~H. Chi, ``Modeling task
  relationships in multi-task learning with multi-gate mixture-of-experts,'' in
  \emph{KDD}, 2018, pp. 1930--1939.

\bibitem{tang2020progressive}
H.~Tang, J.~Liu, M.~Zhao, and X.~Gong, ``Progressive layered extraction (ple):
  A novel multi-task learning (mtl) model for personalized recommendations,''
  in \emph{RecSys}, 2020, pp. 269--278.

\bibitem{xi2021modeling}
D.~Xi, Z.~Chen, P.~Yan, Y.~Zhang, Y.~Zhu, F.~Zhuang, and Y.~Chen, ``Modeling
  the sequential dependence among audience multi-step conversions with
  multi-task learning in targeted display advertising,'' in \emph{KDD}, 2021.

\bibitem{zhao2019multiple}
J.~Zhao, B.~Du, L.~Sun, F.~Zhuang, W.~Lv, and H.~Xiong, ``Multiple relational
  attention network for multi-task learning,'' in \emph{SIGKDD}, 2019, pp.
  1123--1131.

\bibitem{liu2019end}
S.~Liu, E.~Johns, and A.~J. Davison, ``End-to-end multi-task learning with
  attention,'' in \emph{CVPR}, 2019, pp. 1871--1880.

\bibitem{culp2008iterative}
M.~Culp and G.~Michailidis, ``An iterative algorithm for extending learners to
  a semi-supervised setting,'' \emph{Journal of Computational and Graphical
  Statistics}, vol.~17, no.~3, pp. 545--571, 2008.

\bibitem{haffari2012analysis}
G.~R. Haffari and A.~Sarkar, ``Analysis of semi-supervised learning with the
  yarowsky algorithm,'' \emph{arXiv preprint arXiv:1206.5240}, 2012.

\bibitem{anderson2013modified}
B.~Anderson and J.~M. Hardin, ``Modified logistic regression using the em
  algorithm for reject inference,'' \emph{International Journal of Data
  Analysis Techniques and Strategies}, vol.~5, no.~4, pp. 359--373, 2013.

\bibitem{feelders2000credit}
A.~Feelders, ``Credit scoring and reject inference with mixture models,''
  \emph{Intelligent Systems in Accounting, Finance \& Management}, vol.~9,
  no.~1, pp. 1--8, 2000.

\bibitem{sohn2006reject}
S.~Y. Sohn and H.~Shin, ``Reject inference in credit operations based on
  survival analysis,'' \emph{Expert Systems with Applications}, vol.~31, no.~1,
  pp. 26--29, 2006.

\bibitem{rezende2014stochastic}
D.~J. Rezende, S.~Mohamed, and D.~Wierstra, ``Stochastic backpropagation and
  approximate inference in deep generative models,'' in \emph{ICML}, 2014, pp.
  1278--1286.

\bibitem{kingma2014semi}
D.~P. Kingma, S.~Mohamed, D.~J. Rezende, and M.~Welling, ``Semi-supervised
  learning with deep generative models,'' in \emph{NeurIPS}, 2014, pp.
  3581--3589.

\bibitem{pearl2009causal}
J.~Pearl, ``Causal inference in statistics: An overview,'' \emph{Statistics
  Surveys}, vol.~3, pp. 96--146, 2009.

\bibitem{morgan2015counterfactuals}
S.~L. Morgan and C.~Winship, \emph{Counterfactuals and causal inference}.\hskip
  1em plus 0.5em minus 0.4em\relax Cambridge University Press, 2015.

\bibitem{little2019statistical}
R.~J. Little and D.~B. Rubin, \emph{Statistical analysis with missing
  data}.\hskip 1em plus 0.5em minus 0.4em\relax John Wiley \& Sons, 2019, vol.
  793.

\bibitem{austin2011introduction}
P.~C. Austin, ``An introduction to propensity score methods for reducing the
  effects of confounding in observational studies,'' \emph{Multivariate
  Behavioral Research}, vol.~46, no.~3, pp. 399--424, 2011.

\bibitem{rosenbaum1983central}
P.~R. Rosenbaum and D.~B. Rubin, ``The central role of the propensity score in
  observational studies for causal effects,'' \emph{Biometrika}, vol.~70,
  no.~1, pp. 41--55, 1983.

\bibitem{hassanpour2019learning}
N.~Hassanpour and R.~Greiner, ``Learning disentangled representations for
  counterfactual regression,'' in \emph{ICLR}, 2019.

\bibitem{lopez2017estimation}
M.~J. Lopez and R.~Gutman, ``Estimation of causal effects with multiple
  treatments: a review and new ideas,'' \emph{Statistical Science}, pp.
  432--454, 2017.

\bibitem{kuang2018stable}
K.~Kuang, P.~Cui, S.~Athey, R.~Xiong, and B.~Li, ``Stable prediction across
  unknown environments,'' in \emph{KDD}, 2018, pp. 1617--1626.

\bibitem{kuang2020stable}
K.~Kuang, R.~Xiong, P.~Cui, S.~Athey, and B.~Li, ``Stable prediction with model
  misspecification and agnostic distribution shift,'' in \emph{AAAI}, 2020, pp.
  4485--4492.

\bibitem{shen2020stable}
Z.~Shen, P.~Cui, T.~Zhang, and K.~Kunag, ``Stable learning via sample
  reweighting,'' in \emph{AAAI}, 2020.

\bibitem{zhang2021deep}
X.~Zhang, P.~Cui, R.~Xu, L.~Zhou, Y.~He, and Z.~Shen, ``Deep stable learning
  for out-of-distribution generalization,'' in \emph{CVPR}, 2021.

\bibitem{sato2020unbiased}
M.~Sato, S.~Takemori, J.~Singh, and T.~Ohkuma, ``Unbiased learning for the
  causal effect of recommendation,'' in \emph{RecSys}, 2020, pp. 378--387.

\bibitem{swaminathan2015self}
A.~Swaminathan and T.~Joachims, ``The self-normalized estimator for
  counterfactual learning,'' \emph{NeurIPS}, 2015.

\bibitem{jiang2016doubly}
N.~Jiang and L.~Li, ``Doubly robust off-policy value evaluation for
  reinforcement learning,'' in \emph{ICML}, 2016, pp. 652--661.

\bibitem{wang2019doubly}
X.~Wang, R.~Zhang, Y.~Sun, and J.~Qi, ``Doubly robust joint learning for
  recommendation on data missing not at random,'' in \emph{ICML}, 2019, pp.
  6638--6647.

\bibitem{saito2020asymmetric}
Y.~Saito, ``Asymmetric tri-training for debiasing missing-not-at-random
  explicit feedback,'' in \emph{SIGIR}, 2020, pp. 309--318.

\bibitem{wang2020information}
Z.~Wang, X.~Chen, R.~Wen, S.-L. Huang, E.~Kuruoglu, and Y.~Zheng, ``Information
  theoretic counterfactual learning from missing-not-at-random feedback,''
  \emph{NeurIPS}, pp. 1854--1864, 2020.

\bibitem{guo2021enhanced}
S.~Guo, L.~Zou, Y.~Liu, W.~Ye, S.~Cheng, S.~Wang, H.~Chen, D.~Yin, and
  Y.~Chang, ``Enhanced doubly robust learning for debiasing post-click
  conversion rate estimation,'' in \emph{SIGIR}, 2021.

\bibitem{xu2020adversarial}
D.~Xu, C.~Ruan, E.~Korpeoglu, S.~Kumar, and K.~Achan, ``Adversarial
  counterfactual learning and evaluation for recommender system,''
  \emph{NeurIPS}, 2020.

\bibitem{bonner2018causal}
S.~Bonner and F.~Vasile, ``Causal embeddings for recommendation,'' in
  \emph{RecSys}, 2018, pp. 104--112.

\bibitem{yuan2019improving}
B.~Yuan, J.-Y. Hsia, M.-Y. Yang, H.~Zhu, C.-Y. Chang, Z.~Dong, and C.-J. Lin,
  ``Improving ad click prediction by considering non-displayed events,'' in
  \emph{CIKM}, 2019, pp. 329--338.

\bibitem{chen2021autodebias}
J.~Chen, H.~Dong, Y.~Qiu, X.~He, X.~Xin, L.~Chen, G.~Lin, and K.~Yang,
  ``Autodebias: Learning to debias for recommendation,'' in \emph{SIGIR}, 2021.

\bibitem{misra2016cross}
I.~Misra, A.~Shrivastava, A.~Gupta, and M.~Hebert, ``Cross-stitch networks for
  multi-task learning,'' in \emph{CVPR}, 2016, pp. 3994--4003.

\bibitem{ruder2017sluice}
S.~Ruder, J.~Bingel, I.~Augenstein, and A.~S{\o}gaard, ``Sluice networks:
  Learning what to share between loosely related tasks,'' \emph{arXiv preprint
  arXiv:1705.08142}, vol.~2, 2017.

\bibitem{liu2017multi}
Q.~Liu, S.~Wu, and L.~Wang, ``Multi-behavioral sequential prediction with
  recurrent log-bilinear model,'' \emph{IEEE Transactions on Knowledge and Data
  Engineering}, vol.~29, no.~6, pp. 1254--1267, 2017.

\bibitem{wen2020entire}
H.~Wen, J.~Zhang, Y.~Wang, F.~Lv, W.~Bao, Q.~Lin, and K.~Yang, ``Entire space
  multi-task modeling via post-click behavior decomposition for conversion rate
  prediction,'' in \emph{Proceedings of the 43rd International ACM SIGIR
  conference on research and development in Information Retrieval}, 2020, pp.
  2377--2386.

\bibitem{liu2019complication}
B.~Liu, Y.~Li, S.~Ghosh, Z.~Sun, K.~Ng, and J.~Hu, ``Complication risk
  profiling in diabetes care: A bayesian multi-task and feature relationship
  learning approach,'' \emph{IEEE Transactions on Knowledge and Data
  Engineering}, vol.~32, no.~7, pp. 1276--1289, 2019.

\bibitem{zhao2017feature}
L.~Zhao, Q.~Sun, J.~Ye, F.~Chen, C.-T. Lu, and N.~Ramakrishnan, ``Feature
  constrained multi-task learning models for spatiotemporal event
  forecasting,'' \emph{IEEE Transactions on Knowledge and Data Engineering},
  vol.~29, no.~5, pp. 1059--1072, 2017.

\bibitem{xu2017online}
J.~Xu, P.-N. Tan, J.~Zhou, and L.~Luo, ``Online multi-task learning framework
  for ensemble forecasting,'' \emph{IEEE Transactions on Knowledge and Data
  Engineering}, vol.~29, no.~6, pp. 1268--1280, 2017.

\bibitem{jacobs1991adaptive}
R.~A. Jacobs, M.~I. Jordan, S.~J. Nowlan, and G.~E. Hinton, ``Adaptive mixtures
  of local experts,'' \emph{Neural computation}, vol.~3, no.~1, pp. 79--87,
  1991.

\bibitem{liu2002discretization}
H.~Liu, F.~Hussain, C.~L. Tan, and M.~Dash, ``Discretization: An enabling
  technique,'' \emph{Data Mining and Knowledge Discovery}, vol.~6, no.~4, pp.
  393--423, 2002.

\bibitem{franc2018learning}
V.~Franc, O.~Fikar, K.~Bartos, and M.~Sofka, ``Learning data discretization via
  convex optimization,'' \emph{Machine Learning}, vol. 107, no.~2, pp.
  333--355, 2018.

\bibitem{liu2020empirical}
Q.~Liu, Z.~Liu, and H.~Zhang, ``An empirical study on feature discretization,''
  \emph{arXiv preprint arXiv:2004.12602}, 2020.

\bibitem{chapelle2014simple}
O.~Chapelle, E.~Manavoglu, and R.~Rosales, ``Simple and scalable response
  prediction for display advertising,'' \emph{ACM Transactions on Intelligent
  Systems and Technology (TIST)}, vol.~5, no.~4, pp. 1--34, 2014.

\end{thebibliography}

\end{document}